\colorlet{lb}{blue!50}
\colorlet{lg}{green!50}
\colorlet{lr}{red!50}
\colorlet{ly}{yellow!50}
\colorlet{lo}{orange!50}
\colorlet{ly}{yellow!50}
\colorlet{lp}{purple!50}
\colorlet{lc}{cyan!50}
\colorlet{lpk}{pink!50}
\colorlet{lbr}{brown!50}
\colorlet{lgr}{gray!50}
\colorlet{lpu}{purple!50}
\newtheorem*{rep@theorem}{\rep@title}
\newcommand{\newreptheorem}[2]{%
\newenvironment{rep#1}[1]{%
 \def\rep@title{#2 \ref{##1}}%
 \begin{rep@theorem}}%
 {\end{rep@theorem}}}
\newcommand*{\centerfloat}{%
  \parindent \z@
  \leftskip \z@ \@plus 1fil \@minus \textwidth
  \rightskip\leftskip
  \parfillskip \z@skip}
\newtheorem{lemma}{Lemma}
\newtheorem{theorem}{Theorem}
\newtheorem{definition}{Definition}
\newcommand{\tra}[1]{\renewcommand{\arraystretch}{#1}}
\newcolumntype{K}[1]{>{\centering\arraybackslash}p{#1}}
\newcommand{\ignore}[1]{}
\DeclarePairedDelimiter\abs{\lvert}{\rvert}%
\DeclarePairedDelimiter\norm{\lVert}{\rVert}%
\let\oldabs\abs
\def\abs{\@ifstar{\oldabs}{\oldabs*}}
\let\oldnorm\norm
\def\norm{\@ifstar{\oldnorm}{\oldnorm*}}
\def\eqref#1{equation~\ref{#1}}
\def\1{\bm{1}}
\def\vi{{\bm{i}}}
\DeclareMathAlphabet{\mathsfit}{\encodingdefault}{\sfdefault}{m}{sl}
\SetMathAlphabet{\mathsfit}{bold}{\encodingdefault}{\sfdefault}{bx}{n}
\def\gT{{\mathcal{T}}}
\def\gX{{\mathcal{X}}}
\newcommand{\E}{\mathbb{E}}
\DeclareMathOperator*{\argmin}{arg\,min}
\newcommand{\tauh}{\hat{\tau}}
\newcommand{\gth}{\hat{\gT}}
\def\Pr{\mathbb{P}}
\title{Evaluating Disentanglement of Structured Representations}
\author{%
  Raphaël Dang-Nhu
}
\begin{document}

\maketitle 

\begin{abstract}
We introduce the first metric for evaluating disentanglement at individual hierarchy levels of a structured latent representation. Applied to object-centric generative models, this offers a systematic, unified approach to evaluating (i)~object separation between latent slots (ii)~disentanglement of object properties inside individual slots (iii)~disentanglement of intrinsic and extrinsic object properties. We theoretically show that for structured representations, our framework gives stronger guarantees of selecting a good model than previous disentanglement metrics. Experimentally, we demonstrate that viewing object compositionality as a disentanglement problem addresses several issues with prior visual metrics of object separation. As a core technical component, we present the first representation probing algorithm handling slot permutation invariance.  



\end{abstract}

\section{Introduction}

\paragraph{}
A salient challenge in generative modeling is the ability to decompose the representation of images and scenes into distinct objects that are represented separately and then combined together. Indeed, the capacity to reason about objects and their relations is a central aspect of human intelligence~\citep{spelke1992origins}, which can be used in conjunction with graph neural networks or symbolic solvers to enable relational reasoning.  For instance, \citep{wang2018nervenet} exploit robot body structure to obtain competitive performance and transferability when learning to walk or swim in the Gym environments. \citep{yi2019clevrer} perform state-of-the-art visual reasoning by using an object based representation in combination with a symbolic model. In the last years, several models have been proposed to learn compositional representations in an unsupervised fashion: TAGGER~\citep{greff2016tagger}, NEM~\citep{greff2017neural}, R-NEM~\citep{van2018relational}, MONet~\citep{burgess2019monet}, IODINE~\citep{greff2019multi}, GENESIS~\citep{engelcke2019genesis}, Slot Attention~\citep{locatello2020object}, MulMON~\citep{li2020learning}, SPACE~\citep{lin2020space}. They jointly learn to represent individual objects and to segment the image into meaningful components, the latter often being called "perceptual grouping". These models share a number of common principles: (i)~Splitting the latent representation into several \emph{slots} that are meant to contain the representation of an individual object. (ii)~Inside each slot, encoding information about both object position and appearance. (iii)~Maintaining a symmetry between slots in order to respect the permutation invariance of objects composition. These mechanisms are intuitively illustrated in Figure~\ref{fig_intro}.

\paragraph{}
To compare and select models, it is indispensable to have robust disentanglement metrics. At the level of individual factors of variations, a representation is said to be disentangled when information about the different factors is separated between different latent dimensions~\citep{bengio2013representation,locatello2019challenging}. At object-level, disentanglement measures the degree of object separation between slots. However, all existing metrics~\citep{higgins2016beta,chen2018isolating,ridgeway2018learning, eastwood2018framework,kumar2017variational} are limited to the individual case, which disregards representation structure. To cite~\citet{kim2018disentangling} about the FactorVAE metric:

\begin{displayquote}The  definition  of  disentanglement  we  use [...] is clearly a simplistic one. It does not allow correlations among the factors or hierarchies over them.  Thus this definition seems more suited to synthetic data with independent factors of variation than to most realistic datasets.
\end{displayquote}

\newpage

\paragraph{}
As a result, prior work has restricted to measuring the degree of object separation via pixel-level segmentation metrics. Most considered is the Adjusted Rand (ARI) Index~\citep{rand1971objective,greff2019multi}, where image segmentation is viewed as a cluster assignment for pixels. Other metrics such as Segmentation Covering (mSC)~\citep{arbelaez2010contour} have been introduced to penalize over-segmentation of objects. A fundamental limitation is that they do not evaluate directly the quality of the representation, but instead consider a visual proxy of object separation. This results in problematic dependence on the quality of the inferred segmentation masks, a problem first identified by~\cite{greff2019multi} for IODINE, and confirmed in our experimental study.

\begin{figure}
\centering
\resizebox{1\textwidth}{!}{
\begin{tikzpicture}[
 scale = 1,
 arrow/.style = {thick,-stealth},
 ]
 \usetikzlibrary{shapes}
 \tikzset{cross/.style={cross out, draw, 
         minimum size=2*(#1-\pgflinewidth), 
         inner sep=0pt, outer sep=0pt}}

\tikzset{arrow_1/.style={single arrow, fill=white, anchor=base, align=center,text width=2.8cm}}
\tikzset{font={\fontsize{20pt}{12}\selectfont}}

\tikzset{%
  do path picture/.style={%
    path picture={%
      \pgfpointdiff{\pgfpointanchor{path picture bounding box}{south west}}%
        {\pgfpointanchor{path picture bounding box}{north east}}%
      \pgfgetlastxy\x\y%
      \tikzset{x=\x/2,y=\y/2}%
      #1
    }
  },
  sin wave/.style={do path picture={    
    \draw [line cap=round] (-3/4,0)
      sin (-3/8,1/2) cos (0,0) sin (3/8,-1/2) cos (3/4,0);
  }},
  cross/.style={do path picture={    
    \draw [line cap=round] (-1,-1) -- (1,1) (-1,1) -- (1,-1);
  }},
  plus/.style={do path picture={    
    \draw [line cap=round] (-3/4,0) -- (3/4,0) (0,-3/4) -- (0,3/4);
  }}
}

\node[draw,fill=white,minimum width=2cm,minimum height=2cm] (input_image) at (-6,0) {\includegraphics[width=4cm]{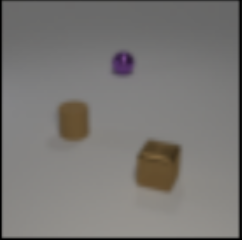}};


\node[align=center,draw,fill=white,minimum width=2.8cm,minimum height=2.8cm] (slot1) at (4,4.7) {Latent \\ slot 1};
\node[align=center,draw,fill=white,minimum width=2.8cm,minimum height=2.8cm] (slot2) at (4,1.7) {Latent \\ slot 2};
\node[align=center,draw,fill=white,minimum width=2.8cm,minimum height=2.8cm] (slot3) at (4,-1.5) {Latent \\ slot 3};
\node[align=center,draw,fill=white,minimum width=2.8cm,minimum height=2.8cm] (slot4) at (4,-4.5) {Latent \\ slot k};


\node[draw] (obj1) at (12.5,4.5) {\includegraphics[width=2cm]{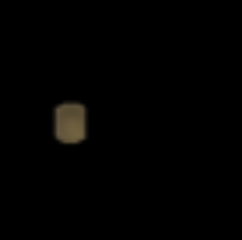}};
\node[draw] (obj2) at (12.5,1.5) {\includegraphics[width=2cm]{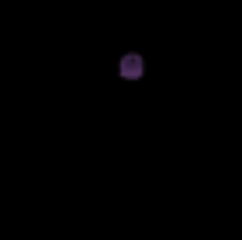}};
\node[draw] (obj3) at (12.5,-1.5) {\includegraphics[width=2cm]{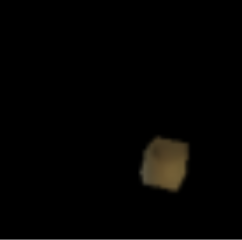}};
\node[draw] (back) at (12.5,-4.5) {\includegraphics[width=2cm]{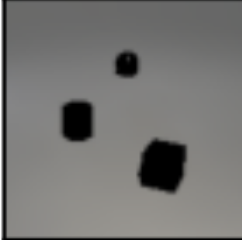}};


\node[draw,fill=white] (output_image) at (24,0) {\includegraphics[width=4cm]{figures/input.png}};


\node[draw,dashed, align=center,minimum width=3cm,minimum height=12cm,color=red!80,line width=1.5mm] (box1) at (4,0) {};
\node[draw,dashed, align=center,minimum width=3cm,minimum height=12cm,color=red!80,line width=1.5mm] (box2) at (12.5,0) {};


\node[draw,arrow_1,minimum width=3cm,minimum height=0.5cm] (enc1) at (-1.5,-0.2) {Encoder};

\node[draw,arrow_1,minimum width=3cm,minimum height=0.2cm] (dec1) at (8,4.5) {Decoder};
\node[draw,arrow_1,minimum width=3cm,minimum height=0.2cm] (dec2) at (8,1.5) {Decoder};
\node[draw,arrow_1,minimum width=3cm,minimum height=0.2cm] (dec3) at (8,-1.5) {Decoder};
\node[draw,arrow_1,minimum width=3cm,minimum height=0.2cm] (dec3) at (8,-4.5) {Decoder};


\node [circle, draw, plus, minimum size=0.7cm] (plus) at ( 16, 0) {};

\draw [->,line width=0.7mm] (obj1) -| (plus);
\draw [->,line width=0.7mm] (obj2) -| (plus);
\draw [->,line width=0.7mm] (obj3) -| (plus);
\draw [->,line width=0.7mm] (back) -| (plus);

\draw [->,line width=0.7mm] (plus) --  (output_image);


\node[align=center,minimum width=2cm,minimum height=1.7cm,color=red] (synth) at (4,7.5) {Object-level \\ disentanglement (ours)};

\node[align=center,minimum width=2cm,minimum height=1.7cm,color=red] (synth) at (12.5,7.5) {Visual object \\ separation (ARI)};

\node[align=center,minimum width=2cm,minimum height=1.7cm] (synth) at (8.3,-6.5) {Decoders have \\ shared weights};

\node[align=center,minimum width=2cm,minimum height=1.7cm] (synth) at (19,1) {Composition\\ module};

\node[align=center,minimum width=2cm,minimum height=1.7cm] (synth) at (-6,3) {Input image};

\node[align=center,minimum width=2cm,minimum height=1.7cm] (synth) at (24,3) {Reconstructed};


\draw[decorate,decoration={brace,raise=0.5cm, amplitude=10mm}] (slot4.south west) -- (slot1.north west)  ;









\end{tikzpicture}
}
\caption{A compositional latent representation is composed of several \emph{slots}. Each slot generates a part of the image. Then, the different parts are composed together. Pixel-level metrics measure object separation between slots at visual level, while our framework operates purely in latent space.}
\label{fig_intro}
\end{figure} 

\paragraph{}
To address these limitations, we introduce the first metric for evaluating disentanglement at individual hierarchy levels of a structured latent representation. Applied to object-centric generative models, this offers a systematic, unified approach to evaluating (i)~object separation between latent slots (ii)~disentanglement of object properties inside individual slots (iii)~disentanglement of intrinsic and extrinsic object properties. We theoretically show that our framework gives stronger guarantees of representation quality than previous disentanglement metrics. Thus, it can safely substitute to them. We experimentally demonstrate the applicability of our metric to three architectures: MONet, GENESIS and IODINE. The results confirm issues with pixel-level segmentation metrics, and offer valuable insight about the representation learned by these models. Finally, as a core technical component, we present the first representation probing algorithm handling slot permutation invariance.

\section{Background}\label{sec_background}

\subsection{Disentanglement criteria}

\paragraph{}
There exists an extensive literature discussing notions of disentanglement, accounting for all the different definitions is outside the scope of this paper. We chose to focus on the three criteria formalized by~\citet{eastwood2018framework}, which stand out because of their clarity and simplicity. \textit{Disentanglement} is the degree to which a representation separates the underlying factors of variation, with each latent variable capturing at most one generative factor. \textit{Completeness} is the degree to which each underlying factor is captured by a single latent variable. \textit{Informativeness} is the amount of information that a representation captures about the underlying factors of variation. Similarly to prior work, the word \emph{disentanglement} is also used as a generic term that simultaneously refers to these three criteria. It should be clear depending on the context whether it is meant as general or specific.

\subsection{The DCI framework}

\paragraph{}
For brevity reasons, we only describe the DCI metrics of~\cite{eastwood2018framework}, which are most closely related to our work. The supplementary material provides a comprehensive overview of alternative metrics. Consider a dataset $\gX$ composed of $n$ observations $x^{1},\ldots,x^{n}$, which we assume are generated by combining $F$ underlying factors of variation. The value of the different factors for observation $x^l$ are denoted $v_1^l,\ldots,v_F^l$. Suppose we have learned a representation $z = (z_1,\ldots,z_L)$ from this dataset. The DCI metric is based on the affinity matrix $ R = (R_{i,j})$, where $R_{i,j}$ measures the relative importance of latent $z_i$ in predicting the value of factor $v_j$. Supposing appropriate normalization for $R$, disentanglement is measured as the weighted average of the matrix' row entropy. Conversely, completeness is measured as the weighted average of column entropy. Informativeness is measured as the normalized error of the predictor used to obtain the matrix $R$.

\section{Evaluating structured disentanglement}\label{section_disentanglement}

\paragraph{}
\ref{ss_hlp} contains a high level description of the goals steering our framework. Our metric is formally described in~\ref{ss_def}. In~\ref{ss_th}, we present theoretical results showing that our framework can safely substitute to DCI since it provides stronger guarantees that the selected model is correctly disentangled.


\subsection{Presentation of the framework}\label{ss_hlp}

\paragraph{}
Prior work has focused on measuring disentanglement at the level of individual factors of variation and latent dimensions (that we will refer to as \emph{global} or \emph{unstructured} disentanglement). In the case of compositional representations with several objects slots, additional properties are desirable:

\begin{enumerate}
\item 
\textbf{Object-level disentanglement:} Objects and latent slots should have a one-to-one matching. That is, changing one object should lead to changes in a single slot, and vice-versa. 

\item 
\textbf{Slot disentanglement:} Inside a given slot, properties of the object must be disentangled.

\item 
\textbf{Slot symmetry:} The latent dimension responsible for a given factor (e.g., color) should be invariant across slots. This means that all slots should have the same inner structure.

\end{enumerate}

\paragraph{}
Our structured disentanglement metric allows to evaluate all these criteria within a single unified framework. Similarly to DCI, it is based on the affinity matrix $(R_{\tauh,\tau})$, where $R_{\tauh,\tau}$ measures the relative importance of latent $z_{\tauh}$ in predicting the value of factor $v_\tau$. The key novelty compared to DCI is that we propose to measure disentanglement with respect to arbitrary \textbf{projections} of the matrix $R$. Intuitively, projections correspond to a marginalization operation that selects a subset of hierarchy levels and discards the others. The coefficients $R_{\tauh,\tau}$ that are projected together are summed to create group affinities. Figure~\ref{toy_example} gives an intuitive illustration of this process on a toy example. 

\paragraph{}
With object-centric representations, projecting at the object/slot level allows to study the relation of objects and slots without taking their internal structure into consideration. Ultimately this permits to evaluate our object-level disentanglement criterion. Projecting at property level allows to study the internal slot representation independently of the object, for evaluation of both internal slot disentanglement and slot symmetry, with a single metric. The identity projection conserving all levels measures flat (DCI) disentanglement. This generalizes to arbitrary hierarchies in the representation, such as disentanglement of position and appearance, or intrinsic and extrinsic object properties.

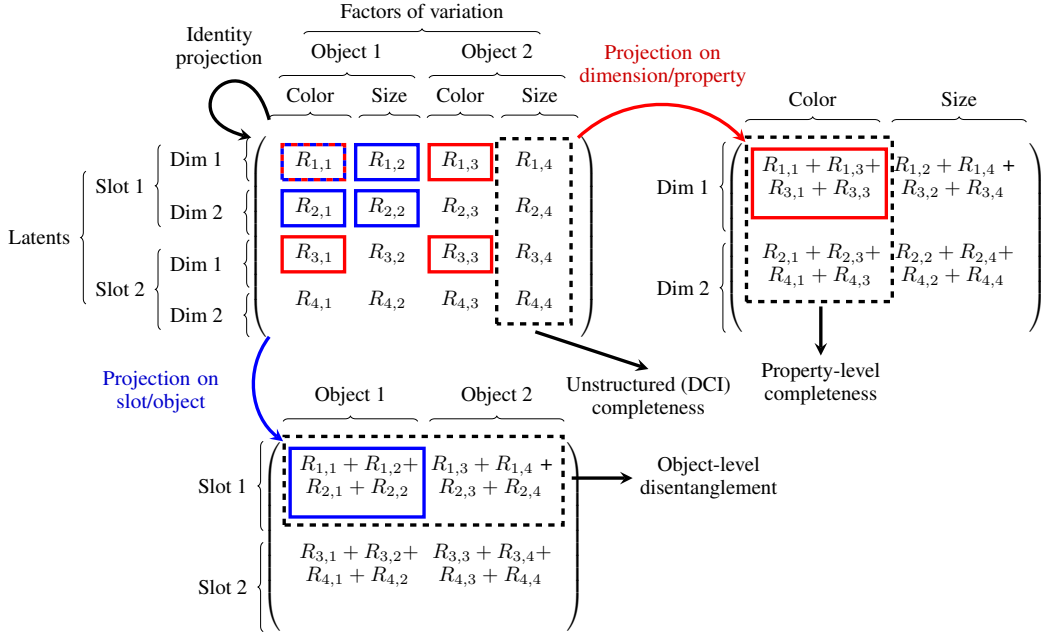
\begin{figure}[t]
\centering
\resizebox{\textwidth}{!}{%
\begin{tikzpicture}[
 scale = 1,
 arrow/.style = {thick,-stealth},
 mydashed/.style={dashed,dash phase=3pt}
 ]
 
 \matrix[matrix of math nodes,nodes={align=center, text width=1cm,text depth=0.3cm},left delimiter={(},right delimiter={)},ampersand replacement=\&] (mat1) {
    R_{1,1}\& R_{1,2} \& R_{1,3}\& R_{1,4} \\ 
    R_{2,1}\& R_{2,2} \& R_{2,3}\& R_{2,4} \\ 
    R_{3,1}\& R_{3,2} \& R_{3,3}\& R_{3,4} \\ 
    R_{4,1}\& R_{4,2} \& R_{4,3}\& R_{4,4} \\ 
 };
 
\draw[color=blue, ultra thick,dashed] ($ (mat1-1-1.south west) + (0.1,0.2) $) rectangle ($ (mat1-1-1.north east) + (-0.1,0) $);
\draw[color=blue, ultra thick] ($ (mat1-1-2.south west) + (0.1,0.2) $) rectangle ($ (mat1-1-2.north east) + (-0.1,0) $);
\draw[color=blue, ultra thick] ($ (mat1-2-1.south west) + (0.1,0.2) $) rectangle ($ (mat1-2-1.north east) + (-0.1,0) $);
\draw[color=blue, ultra thick] ($ (mat1-2-2.south west) + (0.1,0.2) $) rectangle ($ (mat1-2-2.north east) + (-0.1,0) $);

\draw[color=red, ultra thick, mydashed] ($ (mat1-1-1.south west) + (0.1,0.2) $) rectangle ($ (mat1-1-1.north east) + (-0.1,0) $);
\draw[color=red, ultra thick] ($ (mat1-1-3.south west) + (0.1,0.2) $) rectangle ($ (mat1-1-3.north east) + (-0.1,0) $);
\draw[color=red, ultra thick] ($ (mat1-3-1.south west) + (0.1,0.2) $) rectangle ($ (mat1-3-1.north east) + (-0.1,0) $);
\draw[color=red, ultra thick] ($ (mat1-3-3.south west) + (0.1,0.2) $) rectangle ($ (mat1-3-3.north east) + (-0.1,0) $);

\draw[decorate, decoration ={brace,raise=1pt}] ($ (mat1.north west) + (0,0.3) $)  -- ($ (mat1.126) + (0,0.3) $) node (mat1color1) [midway, above=0.5em] {Color};
\draw[decorate, decoration ={brace,raise=1pt}] ($ (mat1.123) + (0,0.3) $)  -- ($ (mat1.92) + (0,0.3) $) node (mat1or1) [midway, above=0.5em] {Size};

\draw[decorate, decoration ={brace,raise=1pt}] ($ (mat1.88) + (0,0.3) $)  -- ($ (mat1.55) + (0,0.3) $) node (mat1color2) [midway, above=0.5em] {Color};
\draw[decorate, decoration ={brace,raise=1pt}] ($ (mat1.52) + (0,0.3) $)  -- ($ (mat1.north east) + (0,0.3) $) node (mat1or2) [midway, above=0.5em] {Size};

\draw[decorate, decoration ={brace,raise=1pt}] (mat1color1.north west) -- (mat1or1.north east) node (mat1obj1) [midway, above=0.5em] {Object 1};
\draw[decorate, decoration ={brace,raise=1pt}] (mat1color2.north west) -- (mat1or2.north east) node (mat1obj2) [midway, above=0.5em] {Object 2};

\draw[decorate, decoration ={brace,raise=1pt}] (mat1obj1.north west) -- (mat1obj2.north east) node (mat1gt) [midway, above=0.5em] {Factors of variation};

\draw[decorate, decoration ={brace,raise=1pt}] ($ (mat1.south west) + (-0.3,0) $)  -- ($ (mat1.200) + (-0.3,0) $) node (mat1dim4) [midway, left=1em] {Dim 2};
\draw[decorate, decoration ={brace,raise=1pt}] ($ (mat1.198) + (-0.3,0) $)  -- ($ (mat1.181) + (-0.3,0) $) node (mat1dim3) [midway, left=1em] {Dim 1};
\draw[decorate, decoration ={brace,raise=1pt}] ($ (mat1.179) + (-0.3,0) $)  -- ($ (mat1.163) + (-0.3,0) $) node (mat1dim2) [midway, left=1em] {Dim 2};
\draw[decorate, decoration ={brace,raise=1pt}] ($ (mat1.160) + (-0.3,0) $)  -- ($ (mat1.north west) + (-0.3,0) $) node (mat1dim1) [midway, left=1em] {Dim 1};

\draw[decorate, decoration ={brace,raise=1pt}] (mat1dim2.south west) -- (mat1dim1.north west) node (mat1slot1) [midway, left=0.5em] {Slot 1};
\draw[decorate, decoration ={brace,raise=1pt}] (mat1dim4.south west) -- (mat1dim3.north west) node (mat1slot2) [midway, left=0.5em] {Slot 2};

\draw[decorate, decoration ={brace,raise=1pt}] (mat1slot2.south west) -- (mat1slot1.north west) node (mat1lat) [midway, left=0.5em] {Latents};

 
 \matrix[matrix of nodes,nodes={align=center, text width=2cm,text depth=1cm},left delimiter={(},right delimiter={)},below=5em of mat1,,ampersand replacement=\&] (mat2) {
    {$R_{1,1} +  R_{1,2} +$\\ $R_{2,1} + R_{2,2}$} \& {$R_{1,3} + R_{1,4}$ +\\ $R_{2,3} + R_{2,4}$}  \\ 
    {$R_{3,1} +  R_{3,2} +$\\ $R_{4,1} + R_{4,2}$} \& {$R_{3,3} + R_{3,4} +$\\ $R_{4,3} + R_{4,4} $} \\
 };

\draw[decorate, decoration ={brace,raise=1pt}] ($ (mat2.north west) + (0,0.3) $)  -- ($ (mat2.north) + (-0.1,0.3) $) node (mat1color1) [midway, above=0.5em] {Object 1};
\draw[decorate, decoration ={brace,raise=1pt}] ($ (mat2.north) + (0.1,0.3) $)  -- ($ (mat2.north east) + (0,0.3) $) node (mat1or1) [midway, above=0.5em] {Object 2};

\draw[decorate, decoration ={brace,raise=1pt}] ($ (mat2.south west) + (-0.3,0) $)  -- ($ (mat2.west) + (-0.3,-0.1) $) node (mat2comp1) [midway, left=0.5em] {Slot 2};
\draw[decorate, decoration ={brace,raise=1pt}] ($ (mat2.west) + (-0.3,0.1) $)  -- ($ (mat2.north west) + (-0.3,0) $) node (mat2comp2) [midway, left=0.5em] {Slot 1};
 
\draw[color=blue, ultra thick] (mat2-1-1.200) rectangle (mat2-1-1.north east);

  \matrix[matrix of nodes,nodes={align=center, text width=2cm,text depth=1cm},left delimiter={(},right delimiter={)},right=8em of mat1,,ampersand replacement=\&] (mat3) {
    {$R_{1,1} +  R_{1,3} +$\\ $R_{3,1} + R_{3,3}$} \& {$R_{1,2} + R_{1,4}$ +\\ $R_{3,2} + R_{3,4}$}  \\ 
    {$R_{2,1} +  R_{2,3} +$\\ $R_{4,1} + R_{4,3}$} \& {$R_{2,2} + R_{2,4} +$\\ $R_{4,2} + R_{4,4} $} \\
 };

\draw[color=red, ultra thick] (mat3-1-1.200) rectangle (mat3-1-1.north east);

\draw[decorate, decoration ={brace,raise=1pt}] ($ (mat3.north west) + (0,0.3) $)  -- ($ (mat3.north) + (-0.1,0.3) $) node (mat1color1) [midway, above=0.5em] {Color};
\draw[decorate, decoration ={brace,raise=1pt}] ($ (mat3.north) + (0.1,0.3) $)  -- ($ (mat3.north east) + (0,0.3) $) node (mat1or1) [midway, above=0.5em] {Size};

\draw[decorate, decoration ={brace,raise=1pt}] ($ (mat3.south west) + (-0.3,0) $)  -- ($ (mat3.west) + (-0.3,-0.1) $) node (mat3gr1) [midway, left=0.5em] {Dim 2};
\draw[decorate, decoration ={brace,raise=1pt}] ($ (mat3.west) + (-0.3,0.1) $)  -- ($ (mat3.north west) + (-0.3,0) $) node (mat3gr2) [midway, left=0.5em] {Dim 1};

\draw[color=blue,arrow, ultra thick, bend angle=45, bend right] (mat1.south west) to  node[left=0.5em,text width=2.5cm,align=center,color=blue!80!black,font=\normalsize] (proj1) {Projection on\\ slot/object}  (mat2.north west);
 
\draw[color=red,arrow, ultra thick, bend angle=45, bend left] (mat1.north east) to node[above=0.5em,text width=2.8cm,align=center,color=red!80!black,font=\normalsize] (proj2) {Projection on \\ dimension/property} (mat3.north west) ;

\draw[color=black,arrow, ultra thick,  bend right=120,min distance=1.5cm] ($ (mat1.north west) + (0,0.3) $) to node[above=0.5em,text width=2.8cm,align=center,color=black,font=\normalsize] (proj2) {Identity \\ projection} ($ (mat1.north west) + (-0.3,0) $) ;
 
\draw[dashed, ultra thick] ($ (mat3-2-1.south west) + (-0.1,0.4) $) rectangle ($ (mat3-1-1.north east) + (0.1,0.2) $) node (rec1) {};
\draw[dashed, ultra thick] ($ (mat2-1-1.south west) + (-0.1,0.2) $) rectangle ($ (mat2-1-2.north east) + (0.1,0.2) $) node (rec2) {};
\draw[dashed, ultra thick] ($ (mat1-4-4.south west) + (0,0.1) $) rectangle ($ (mat1-1-4.north east) + (0,0.1) $) node (rec3) {};

\draw ($ (rec2.east) + (2.3,-0.7) $) node[align=center] (dis) {Object-level\\ disentanglement};
\draw ($ (rec1.south) + (-1.2,-4) $) node[align=center] (comp) {Property-level\\ completeness};
\draw ($ (rec3.south) + (1.3,-4.2) $) node[align=center] (uns) {Unstructured (DCI) \\ completeness};

\draw[arrow, ultra thick] ($ (rec2.east) + (0,-0.7) $) to   (dis.west);
\draw[arrow, ultra thick] ($ (rec1.south) + (-1.2,-2.7) $) to   (comp.north);
\draw[arrow, ultra thick] ($ (rec3.south) + (-0.6,-3.1) $) to   (uns.north);

\end{tikzpicture}
}

\caption{Overview of projections on a toy example of object-centric representation. The affinity scores $R$ are marginalized according to the selected hierarchy levels. In projected space, row entropy measures disentanglement, while column entropy measures completeness.  The measure of entropy involves a renormalization by row (for completeness) or column (for disentanglement).  The affinity scores $R$ are computed in an all-pairs manner. A mapping can be obtained (e.g. from object to slot) by looking for the maximum of each column. }
\label{toy_example}
\end{figure} 

\subsection{Mathematical definition}\label{ss_def}

To formalize our framework most conveniently,  we propose to view the affinity matrix $R = (R_{\tauh,\tau})$ as a joint random variable $(X,Y)$, where $X$ is a random latent dimension, and $Y$ a random factor. Supposing that $R$ is normalized to have sum one, this means $\Pr[X = \tauh , Y = \tau] =  R_{\tauh,\tau}$. This point of view, which is only implicit in prior work, allows to formalize the projections of $R$ as a coupled marginalization operation $(\rho(X),\rho(Y))$, where $\rho( \alpha_{1},\ldots,\alpha_{h}) = (\alpha_{e_1},\ldots,\alpha_{e_l})$ selects a subset of hierarchy levels.  This yields a concise and elegant mathematical framework, where we can build on standard information-theoretic identities to derive theoretical properties.  In the following, $H_{U} \left( A | B \right) $ denotes the conditional entropy of $A$ with respect to $B$ (detailed definitions in the supplementary). Our disentanglement criteria are the following: \\

\textbf{Completeness} with respect to projection $\rho$ measures how 
well the projected factors are captured by a coherent group of latents, by measuring column entropy in the projected space. It is measured as
\[
C(\rho) = 1 - H_{U} \left( \rho(X ) | \rho(Y ) \right) ,
\]
where $U = |\rho(\hat{\gT})|$ is the number of groups of latents in the projection. \\

\textbf{Disentanglement} with respect to projection $\rho$ measures to what extent a group of latent variables influences a coherent subset of factors, by measuring projected row entropy. It is measured as
\[
D(\rho) = 1 - H_V \left( \rho(Y ) | \rho(X ) \right)  ,
\]
where $V = \left| \rho\left(\gT\right)\right|$ is the number of groups of factors in the projection. \\

\textbf{Informativeness} does not depend on the projection. It is defined as the normalized error of a low-capacity model $f$ that learns to predict the factor values $v$ from the latents $z$, i.e.
\[
I = \frac{\left\rVert f(z) - v \right\rVert_2}{\left\rVert v \right\rVert_2}.
\]

The changing log bases $U$ and $V$ aim at ensuring normalization between $0$ and $1$. 

\newpage

\subsection{Establishing trust in our framework: a theoretical analysis}\label{ss_th}

Our disentanglement $D(\rho)$ and completeness $C(\rho)$ metrics depend on the subset of hierarchy levels contained in the projection $\rho$. We theoretically analyze the influence of the projection choice. It is especially interesting to study the behavior when distinct subsets of hierarchy levels are combined. Our key results are the following: Theorem~\ref{theorem_dci} shows that our framework contains DCI as a special case, when the identity projection is chosen. Theorem~\ref{theorem_lower} shows that the metrics associated to the identity projection can be decomposed in terms of one dimensional projections along a single hierarchy level. Together, these results formally show that \textbf{one dimensional projections provide a sound substitute for prior unstructured metrics}. This is very useful to build trust in our framework.

\begin{theorem}[Relation with DCI]\label{theorem_dci}
The DCI metrics of ~\citet{eastwood2018framework} are a special case of our framework, with the identity projection conserving all hierarchy levels.
\end{theorem}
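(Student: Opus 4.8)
The plan is to verify that, under the identity projection $\rho = \mathrm{id}$, the entropic quantities $D(\rho)$ and $C(\rho)$ collapse to the weighted row- and column-entropy averages that define DCI, while the informativeness term is already identical by definition. First I would specialise the definitions: with $\rho = \mathrm{id}$ we have $\rho(X) = X$ and $\rho(Y) = Y$, and the group counts become $V = |\gT| = F$ (all factors) and $U = |\hat{\gT}| = L$ (all latents), so that $D(\mathrm{id}) = 1 - H_F(Y \mid X)$ and $C(\mathrm{id}) = 1 - H_L(X \mid Y)$.

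The core step is to expand the conditional entropy and read off the DCI structure. From the identification $\Pr[X = \tauh, Y = \tau] = R_{\tauh,\tau}$, the marginal $\Pr[X = \tauh] = \sum_\tau R_{\tauh,\tau}$ is exactly the DCI importance weight of latent $z_{\tauh}$, and the conditional $P_{\tauh,\tau} := \Pr[Y = \tau \mid X = \tauh] = R_{\tauh,\tau}\big/\sum_{\tau'} R_{\tauh,\tau'}$ is the normalised row of $R$ used in DCI. Substituting into the chain rule $H_F(Y \mid X) = \sum_{\tauh} \Pr[X = \tauh]\, H_F(P_{\tauh,\cdot})$ expresses the conditional entropy as a weighted average of base-$F$ row entropies. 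Because the weights sum to one ($\sum_{\tauh}\Pr[X = \tauh] = \sum_{\tauh,\tau} R_{\tauh,\tau} = 1$ by the normalisation of $R$), I can absorb the constant into the sum to obtain $D(\mathrm{id}) = \sum_{\tauh} \Pr[X = \tauh]\,(1 - H_F(P_{\tauh,\cdot}))$, which is precisely the DCI disentanglement, written as a weighted average of the per-latent scores $1 - H_F(P_{\tauh,\cdot})$.

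The completeness identity follows from the symmetric computation with $X$ and $Y$ exchanged: the marginal $\Pr[Y = \tau]$ supplies the DCI column weight, the conditional $\tilde{P}_{\tauh,\tau} := \Pr[X = \tauh \mid Y = \tau]$ supplies the normalised column, the base $L$ matches the number of latents, and $C(\mathrm{id}) = \sum_\tau \Pr[Y = \tau]\,(1 - H_L(\tilde{P}_{\cdot,\tau}))$ recovers the DCI completeness. Informativeness needs no argument, since both frameworks define it as the same normalised predictor error $I = \|f(z) - v\|_2 / \|v\|_2$.

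There is no genuine mathematical obstacle here; the theorem is essentially a change of viewpoint, repackaging DCI's two weighted averages of entropies as single conditional entropies. The only point requiring care is the bookkeeping of logarithm bases and weights: I must confirm that the base $V = F$ (respectively $U = L$) forced by the identity projection coincides with the base (number of factors, respectively latents) used in the original DCI row and column entropies, and that the marginalisation weights $\Pr[X = \tauh]$ and $\Pr[Y = \tau]$ reproduce exactly the DCI importance weights rather than some renormalised variant. Once these identifications are checked, the reduction of the conditional-entropy form to the weighted-average-of-entropies form is immediate.
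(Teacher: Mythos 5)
Your proposal is correct and follows essentially the same route as the paper's own proof: specialize to the identity projection, expand the conditional entropy via the chain rule into a weighted average of row/column entropies, identify the conditionals $\Pr[\tauh \mid \tau]$ (resp.\ $\Pr[\tau \mid \tauh]$) with the normalized DCI matrices and the marginals $\Pr[Y=\tau]$, $\Pr[X=\tauh]$ with the DCI importance weights, and check that the log bases $L$ and $F$ match. The only cosmetic difference is that you carry out the disentanglement case in detail and invoke symmetry for completeness, whereas the paper does the reverse; your explicit remark that informativeness coincides by definition is a point the paper's proof leaves implicit.
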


Theorem~\ref{theorem_lower} uses the intuitive notion of decomposition of a projection. The projection $\rho$ is said to be decomposed into $\rho_1, \ldots \rho_k$ if the set of hierarchy levels selected by $\rho$ is the disjunct union of the set of levels selected by $\rho_1, \ldots \rho_k$.  In the case of object-centric latent representations, the identity projection $\rho_{\textrm{id}}$ that keeps all hierarchy levels $\{ \textrm{object}, \textrm{property} \}$ can be decomposed as $\rho_\textrm{object}$ considering only $\{ \textrm{object} \}$ and $\rho_\textrm{property}$ considering only $\{ \textrm{property} \}$.

\begin{theorem}[Decomposition of a projection]\label{theorem_lower}
Consider a decomposition of the projection $\rho$ (with $L$ latent groups) into disjunct projections $\rho_1, \ldots \rho_k$ (with respectively $L^1,\ldots,L^k$ latent groups). The following lower bound for the joint completeness (resp. disentanglement) holds
\[
  1-k + \sum_{s=1}^k C( \rho_s) \leq 1 - \sum_{s=1}^k \frac{1 - C( \rho_s)}{\log_{L^s}(L)} \leq C\left(\rho\right).
\]
\end{theorem}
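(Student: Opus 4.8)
The plan is to reduce both inequalities to a single information-theoretic fact — subadditivity of conditional entropy under a disjunct decomposition — and then to translate it through the normalization conventions in the definition of $C$. I will prove the completeness bound; the disentanglement bound follows verbatim after exchanging the roles of $X$ and $Y$.

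First I would make the product structure of the decomposition explicit. Since the hierarchy levels selected by $\rho$ are the disjunct union of those selected by $\rho_1,\ldots,\rho_k$, the projected latent $\rho(X)$ is, up to reordering of coordinates, the tuple $(\rho_1(X),\ldots,\rho_k(X))$, and likewise $\rho(Y)=(\rho_1(Y),\ldots,\rho_k(Y))$; in particular each $\rho_s(Y)$ is a deterministic function of $\rho(Y)$. Counting latent groups, refining a projection by adjoining further hierarchy levels can only split groups, so $L\ge L^s$ for every $s$, whence $\log_{L^s}(L)\ge 1$. (When the hierarchy is a full product one even has $L=\prod_s L^s$, but only the inequality $L\ge L^s$ is needed below.) Writing $\mathcal{H}$ for conditional entropy in a fixed reference base, the definition of $C$ reads $C(\rho)=1-\mathcal{H}(\rho(X)\mid\rho(Y))/\log L$ and $C(\rho_s)=1-\mathcal{H}(\rho_s(X)\mid\rho_s(Y))/\log L^s$.

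The core estimate, which I expect to be the main step, is
\[
\mathcal{H}\!\left(\rho(X)\mid\rho(Y)\right)\;\le\;\sum_{s=1}^k \mathcal{H}\!\left(\rho_s(X)\mid\rho_s(Y)\right).
\]
This combines two standard facts: subadditivity of entropy gives $\mathcal{H}(\rho_1(X),\ldots,\rho_k(X)\mid\rho(Y))\le\sum_s\mathcal{H}(\rho_s(X)\mid\rho(Y))$, and because $\rho_s(Y)$ is a function of $\rho(Y)$, conditioning on the finer variable can only decrease entropy, so $\mathcal{H}(\rho_s(X)\mid\rho(Y))\le\mathcal{H}(\rho_s(X)\mid\rho_s(Y))$. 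Substituting the definitions — replacing $\mathcal{H}(\rho(X)\mid\rho(Y))$ by $(1-C(\rho))\log L$ and each $\mathcal{H}(\rho_s(X)\mid\rho_s(Y))$ by $(1-C(\rho_s))\log L^s$ — and dividing through by $\log L$ yields $1-C(\rho)\le\sum_s(1-C(\rho_s))\,\log L^s/\log L$. Recognizing $\log L^s/\log L=1/\log_{L^s}(L)$ gives exactly the right-hand inequality $1-\sum_s(1-C(\rho_s))/\log_{L^s}(L)\le C(\rho)$.

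For the left-hand inequality I would argue term by term. Each normalized conditional entropy lies in $[0,1]$, so $c_s:=1-C(\rho_s)\ge 0$; combined with $\log_{L^s}(L)\ge 1$ this gives $c_s/\log_{L^s}(L)\le c_s$, and summing yields $\sum_s c_s/\log_{L^s}(L)\le\sum_s c_s$. Rearranging, using $\sum_s(1-C(\rho_s))=k-\sum_s C(\rho_s)$, turns this into $1-k+\sum_s C(\rho_s)\le 1-\sum_s(1-C(\rho_s))/\log_{L^s}(L)$, as claimed. The only points requiring care are the normalization bookkeeping across the different log bases and the verification that $\log_{L^s}(L)\ge 1$, both of which ultimately reduce to the single observation that $\rho$ refines each $\rho_s$.
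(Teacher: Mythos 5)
Your proof is correct and follows essentially the same route as the paper's: both establish the right-hand inequality by combining subadditivity of conditional entropy with the fact that conditioning on the full $\rho(Y)$ (of which each $\rho_s(Y)$ is a coordinate, so a deterministic function) can only decrease entropy, then translate via change of base using $\log_{L^s}(L)=\log L/\log L^s$, and both obtain the left-hand inequality from the two observations $1-C(\rho_s)\ge 0$ and $\log_{L^s}(L)\ge 1$. The only differences are cosmetic: you work in a fixed reference base and divide by $\log L$ where the paper works in base $L$ and invokes its change-of-base lemma, and you explicitly justify $L\ge L^s$ (refinement can only split latent groups), which the paper leaves implicit.
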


Suppose that all one dimensional projections verify $C(\rho_s) \geq 1 - \epsilon$, where $\epsilon \geq 0$. We obtain the lower bound $C(\rho_{\textrm{id}}) \geq (1-k) + k (1-\epsilon) = 1 - k \epsilon$ for the identity projection. For object-centric representations, this implies that when object-level completeness and property completeness are both perfect (that is, $\epsilon=0$), then DCI completeness is also perfect. The same works for disentanglement. The supplementary materials contains detailed proofs, a matching upper bound, as well as an explicit formula in the special case $k=2$.

\section{Permutation invariant representation probing}\label{sec_affinity}

\emph{Representation probing} aims at quantifying the information present in each latent dimension,  to obtain the relative importance $R_{\tauh,\tau}$ of latent $z_{\tauh}$ in predicting factor $v_\tau$. Traditional probing methods either use regression feature importance or mutual information to obtain $R$ (see the supplementary). However, these techniques do not account for the permutation invariance of object-centric representations, in which slot reordering leaves the generated image unchanged. To address this, we propose a novel formulation as a permutation invariant feature importance problem. As a core technical contribution of this paper, we present an efficient EM-like algorithm to solve this task in a tractable way. Traditional representation probing fits a regressor $f$ to predict the factors $v$ from latents $z$, solving
\[
	\argmin\limits_{f} \sum_{j = 1}^n \left\lVert v^j - f(z^j)\right\rVert_2^2.
\]
Then, the matrix $R$ is extracted from the feature importances of $f$. In contrast, our formulation of permutation invariant representation probing jointly optimizes on permutations $(\pi_1,\ldots,\pi_n)$ over groups of latent dimensions (for instance slots),  allowing to account for slot permutation invariance of the representation, thus solving
\[
  \argmin\limits_{(\pi_1,\ldots,\pi_n), f} \sum_{j = 1}^n \left\lVert v^j - f(\pi_j(z^j))\right\rVert_2^2.
\]
To address the combinatorial explosion created by the joint optimization, we chose an EM-like approach and iteratively optimize $f$ and the permutations $(\pi_1,\ldots,\pi_n)$, while keeping the other fixed (Algorithm~\ref{alg:reg}).  This method yields satisfying approximate solutions provided a good initialization (See Figure~\ref{fig_perm_0} and the supplementary).

\renewcommand{\algorithmiccomment}[1]{\emph{\# #1}}
\begin{algorithm}[t]
   \caption{Permutation-invariant representation probing}
   \label{alg:reg}
\begin{algorithmic}
   \STATE {\bfseries Input:} Latent codes $z^1,\ldots,z^n$ and factor values $v^1,\ldots,v^n$ for $n$ input images
   \FOR{$i=1$ {\bfseries to} $\mathrm{n\_iters}$} 
   \STATE \COMMENT{M STEP}
   \STATE Fit predictor $f_i$ to features $z^1,\ldots,z^n$ and targets $v^1,\ldots,v^n$. 
   \STATE \COMMENT{E STEP}
   \FOR{$j=1$ {\bfseries to} $n$}
   \STATE $\pi_{\min} \leftarrow \argmin\limits_{\pi} \left\lVert v^j - f_i(\pi(z^j))\right\rVert_2^2$ \COMMENT{ ($\pi$ ranges on all slot permutations)}
   \STATE $z^j \leftarrow \pi_{\min}(z^j)$ 
   \ENDFOR
   \ENDFOR
   \STATE Fit $f_{\mathrm{final}}$ to features $z^1,\ldots,z^n$ and targets $v^1,\ldots,v^n$.
   \STATE Obtain feature importances from $f_{\mathrm{final}}$.
\end{algorithmic}
\end{algorithm}

\section{Experimental evaluation}

\paragraph{Models}
We compare three different architectures with public Pytorch implementations: MONet, GENESIS and IODINE, which we believe to be representative of object-centric representations. We evaluate two variants of MONet: 
the first one, denoted as "MONet (Att.)" follows the original design in \cite{burgess2019monet} and uses the masks inferred by the attention network in the reconstruction. The second, denoted as "MONet (Dec.)" is a variant that instead uses the decoded masks. For all models, we perform an ablation of the disentanglement regularization in the training loss.  Ablated models are trained with pure reconstruction loss, denoted as (R). All our final feature importances are obtained with random forests.

\paragraph{Datasets} We evaluate all models on CLEVR6~\citep{johnson2017clevr} and Multi-dSprites~\citep{matthey2017dsprites,burgess2019monet}, with the exception of IODINE that we restricted to Multi-dSprites for computational reasons,  as CLEVR6 requires a week on 8 V100 GPUs per training. These datasets are the most common benchmarks for learning object-centric latent representations. Both contain images generated compositionally. Multi-dSprites has 2D shapes on a variable color background, while CLEVR6 is composed of realistically rendered 3D scenes. We slightly modified CLEVR6 to ensure that all objects are visible in the cropped image.  Table~\ref{tab_global_small} contains our experimental results at both object and property-level, compared to pixel-level segmentation metrics (ARI and mSC).  We highlight some of these numbers in Figure~\ref{fig_summary}. Figure~\ref{fig_mds_coeffs_0} shows the different projections of the affinity matrix $R$ as Hinton diagrams. Figure~\ref{fig_perm_0} visualizes the slot permutation learned by our permutation invariant probing algorithm.

\begin{figure}[h]

\center
\includegraphics[width=\linewidth]{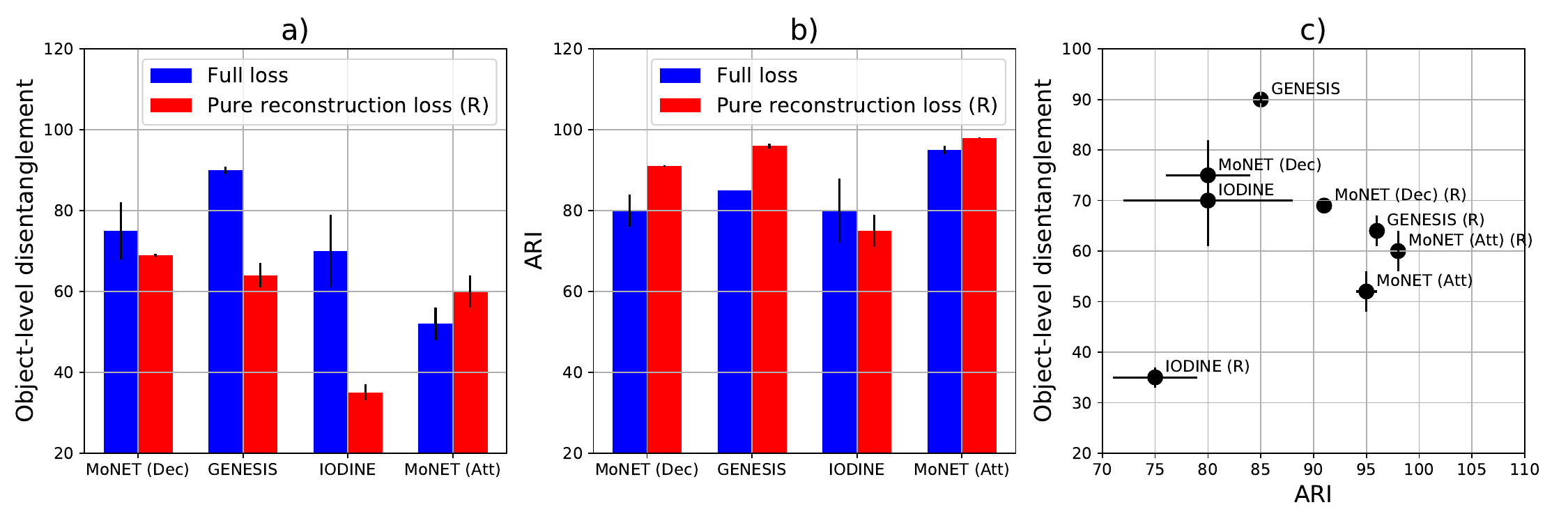}
\caption{ a) Ablation of disentanglement regularization tends to \emph{decrease} disentanglement at object-level (Multi-dSprites), b) Ablation of disentanglement regularization tends to \emph{increase} pixel-level segmentation (measured with ARI on Multi-dSprites). c) Object-level disentanglement vs. ARI for different models trained on Multi-dSprites. We observe a negative correlation between both metrics.}\label{fig_summary}
\end{figure}

\begin{figure}[h]

\vspace{-1.5cm}
\center
\includegraphics[width=0.75\linewidth]{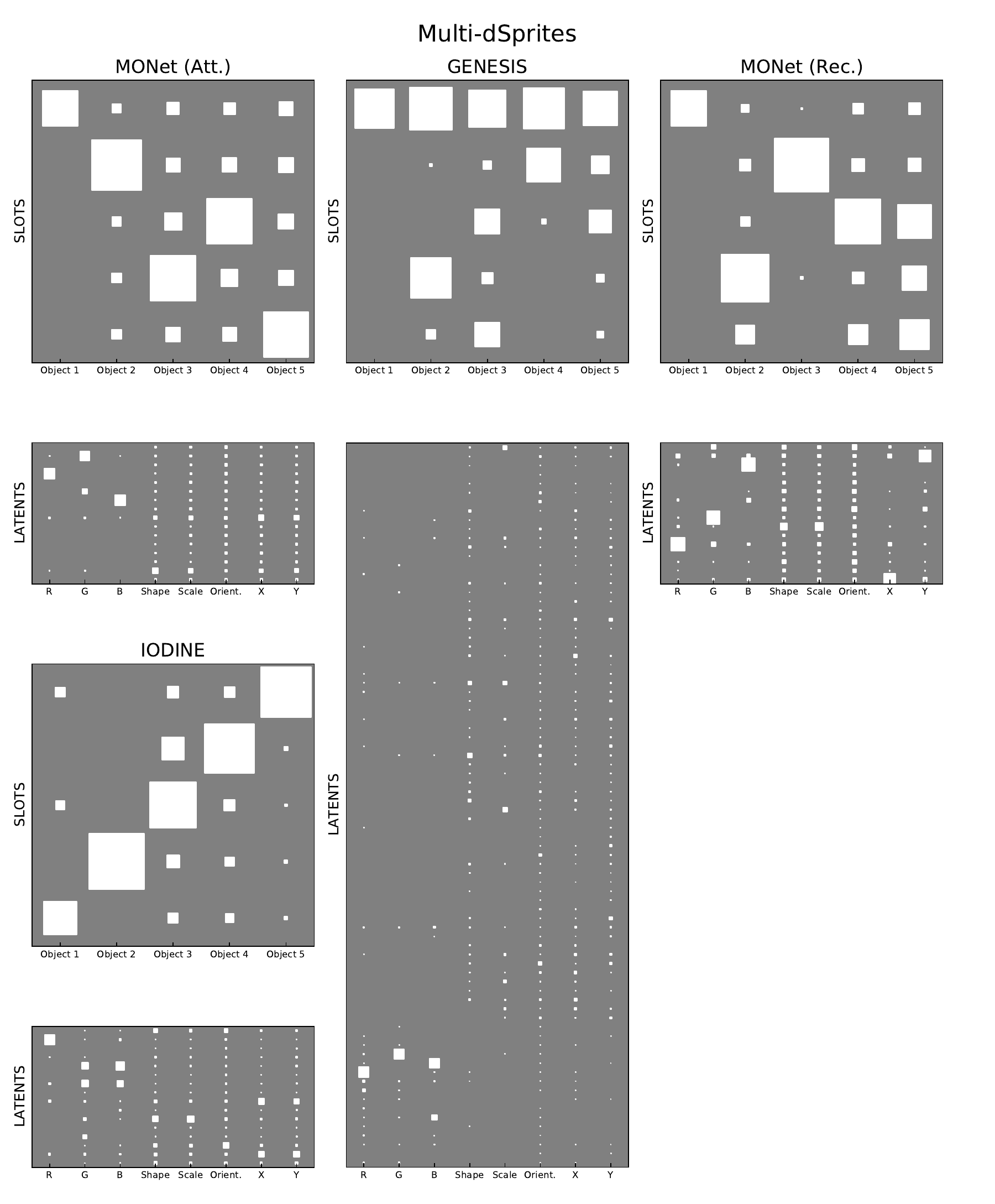}
\vspace{-0.2cm}
\caption{\footnotesize{Projections of the affinity matrix on Multi-dSprites for IODINE, GENESIS and the two variants of MONet, as Hinton diagrams: The white area is proportional to coefficient value.}}\label{fig_mds_coeffs_0}
\end{figure}

\begin{figure}
\vspace{-0.3cm}
\centerfloat
\includegraphics[width=0.7\linewidth]{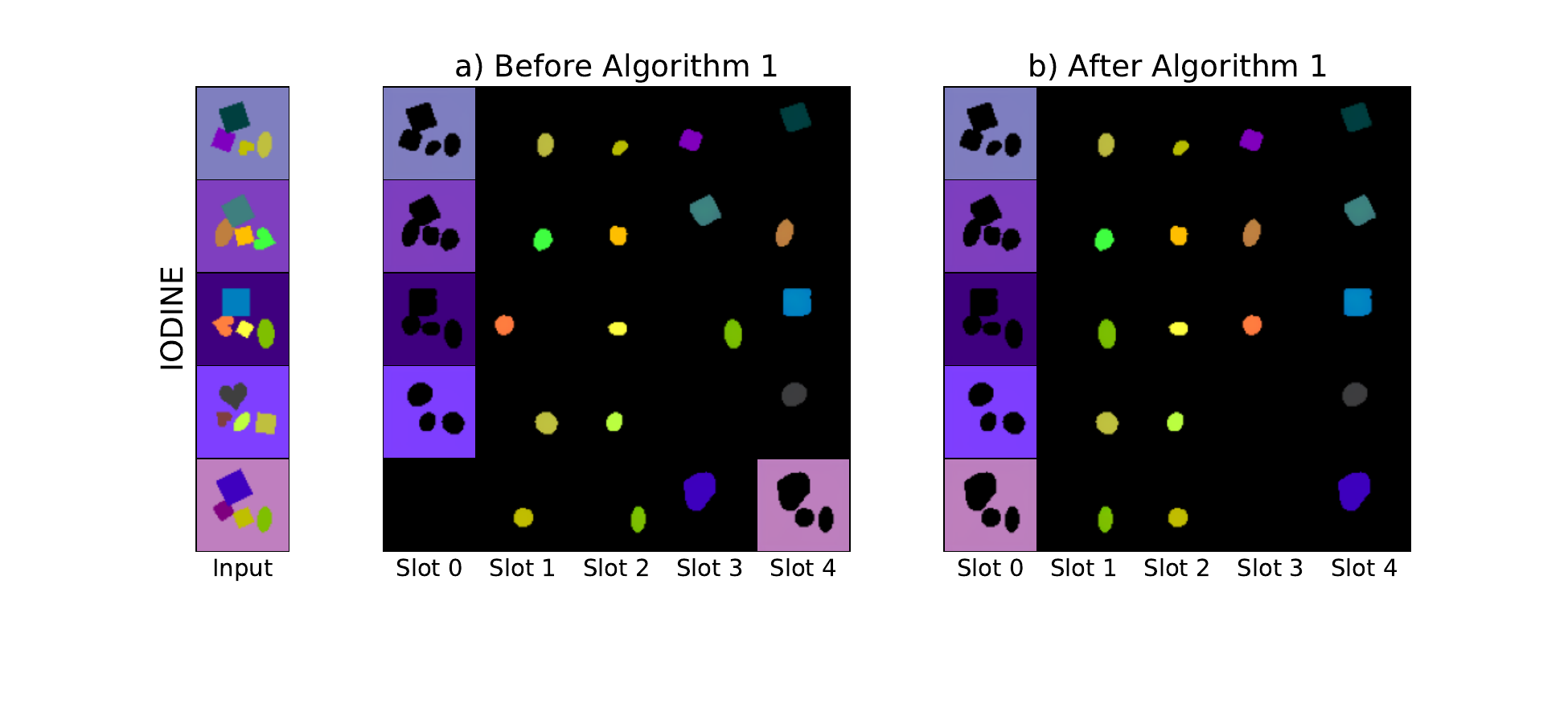}
\vspace{-0.8cm}
\caption{Effect of EM probing (Alg. 1) on slot ordering for a group of similar inputs.  Each row corresponds to the decomposition of a given input between slots. a) is without EM probing,  b) with EM-probing.  The decomposition is compared to a reference decomposition (on the first row).  We observe that EM-probing assigns objects to slot in a way that is always consistent to the reference input.  That is, similar objects are matched to similar slots.}\label{fig_perm_0}
\end{figure}

\vspace{-0.1cm}
 \begin{minipage}{\textwidth}
  \begin{minipage}[b]{0.43\textwidth}
    \centering
\includegraphics[width=\linewidth]{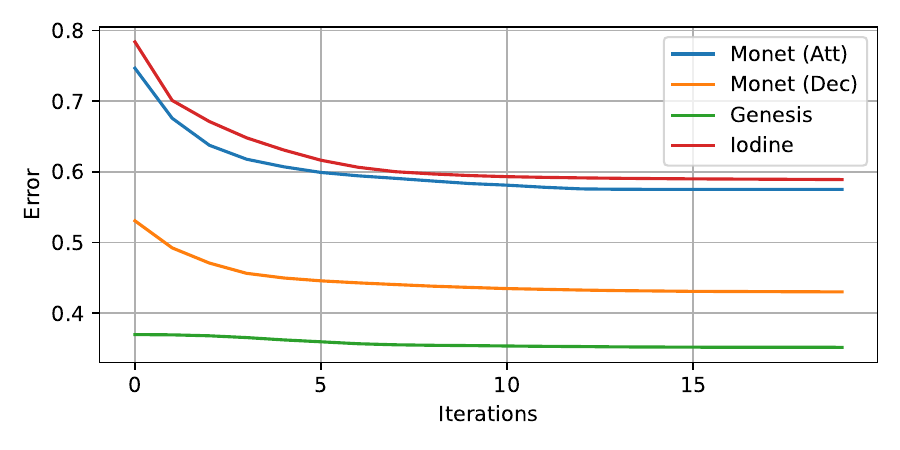}
\vspace{-0.8cm}
    \captionof{figure}{Convergence of the EM probing algorithm, for the different models (Multi-dSprites). Values can not be directly compared with Table 2 as the final predictor has higher capacity.}
  \end{minipage}
  \hfill
  \begin{minipage}[b]{0.55\textwidth}
      \captionof{table}{Comparison of the permutation obtained with EM probing vs obtained with IoU matching (multi-dSprites,  averaged across models). We report (i) percentage of exactly matching permutations (ii) average mismatch between both permutations.}
\vskip -0.5in
\begin{center}
\begin{scriptsize}
\begin{sc}
\tra{1.3}
\center
\addtolength{\tabcolsep}{-2pt}
\begin{tabular}{c c c}
\toprule
 &   \% Exactly matching & \# Average mismatch \\
 Measured &  62 \% & 1.08 \\ 
 Best possible& 100 \% & 0 \\ 
 Worst possible & 0 \% & 5  \\ 
 Chance level & 0.83 \% & 4 \\

\bottomrule
\end{tabular}
\end{sc}
\end{scriptsize}
\end{center}
\vskip -0.1in
    \end{minipage}
  \end{minipage}

\FloatBarrier

\subsection{Relation to pixel-level segmentation metrics.}\label{exp_1}

Our object-level metrics have the same goal as visual metrics such as ARI and mSC: quantifying object separation between slots. Therefore, we observe a general correlation of all metrics, with low values at initialization and improvement throughout training. The main difference is that ARI and mSC evaluate separation at visual level, while our framework purely operates in latent space, measuring the repartition 
of information (see Figure~\ref{fig_intro}). Consequently, ARI and mSC tend to favor sharp object segmentation masks, while our framework is focused on easy extraction of information from the latent space. This fundamental difference leads to specific regimes of negative correlation. Figure~\ref{fig_summary} c) shows that ARI and our object-level metric give very different rankings of trained model performance for Multi-dSprites, suggesting that our framework addresses the dependence of ARI on sharp segmentation masks identified by ~\cite{greff2019multi}.  This is particularly visible for IODINE, which achieves good disentanglement despite its low ARI.

\subsection{Influence of the disentanglement regularization}\label{exp_2}

Figure~\ref{fig_summary} a) and b) shows an ablation study of the  disentanglement regularization. The ablated models are trained with pure reconstruction loss, without regularization of the latent space. For our disentanglement metrics, we observe a consistent negative impact. This is consistent with observations for unstructured representations~\citep{eastwood2018framework,locatello2019challenging}. 
On the contrary, the ablation tends to improve pixel-level segmentation. This would imply that visual object separation is only caused by architectural inductive biases, which clarifies the impact of disentanglement regularization in prior work.

\subsection{Insights about the different models}\label{exp_3}

Qualitatively,  visual inspection of the different projections of matrix $R$ in Figures~\ref{fig_mds_coeffs_0} and \ref{fig_clevr_coeffs} shows a near one-to-one mapping between slots and objects, except some redundancy for the background slot, which is excluded from our metrics.  Our quantitative results in Table~\ref{tab_global_small} confirm a near perfect object disentanglement of up to 90\% for Multi-dSprites.  For CLEVR6 we consider the value of $50\%$ to be decent, because the logarithmic scale of the conditional entropy tends to create a strong shift towards 0. To give a simple reference,  if each object is equally influenced by 2 slots among 6, the expected value is $1 - \log_6(2) = 57\%$.  Table~\ref{tab_global_small}  additionally shows that (i) the robust performance, of GENESIS supports the generalization of its separate mask encoding process and GECO optimizer.  (ii) that IODINE is able to get close to the performance of MONet, despite a less stable training process for IODINE which resulted in an outlier with bad performance. (iii) MONet (Att.) obtains better visual separation metrics, while MONet (Dec.) has better disentanglement.  This is because using the decoded masks in MONet (Dec.) increases the pressure to accurately encode the masks in the latent representation. On the contrary, MONet (Att.) does not use the decoded masks for reconstruction, and thus does not need to encode them very accurately.



\begin{wraptable}{r}{5.8cm}

\vskip -0.3in
\caption{Projection along a third hierarchy level (GENESIS, MdSprites).}\label{tab_int_ext}
\begin{center}
\begin{scriptsize}
\begin{sc}
\tra{1.3}
\center
\addtolength{\tabcolsep}{-1pt}
\begin{tabular}{l  c c }
\toprule
          				   & Extrinsic & Intrinsic  \\
\midrule
 Mask latent      &      0.2     & 0.32 \\                 
 Component latent &      0     & 0.48 \\
\bottomrule
\end{tabular}
\end{sc}
\end{scriptsize}
\end{center}
\vskip -0.2in
\end{wraptable}

\subsection{Extension to other hierarchy levels}\label{exp_4}

To demonstrate generalization to more than two hierarchy levels, we evaluate disentanglement of intrinsic and extrinsic object properties. This specifically applies to GENESIS, for which each slot is divided in a mask latent and component latent. Intuitively, intrinsic properties (such as shape and color) are related to the nature of the object, whereas extrinsic properties (such as position and orientation) are contextual to the scene. The projection along this third hierarchy level is given in Table~\ref{tab_int_ext}, for GENESIS trained on Multi-dSprites. Numbers show that extrinsic properties are successfully captured by the mask latent. However, intrinsic properties are not satisfyingly separated. This is because information about object shape is contained both in the segmentation mask and in the generated image of the object. We believe that recent architectures~\citep{nguyen2020blockgan,ehrhardt2020relate} performing object composition in 3D scene space might solve this problem.

\begin{table*}[t]
\caption{Experimental results (stderr.  over three seeds, values in $\%$).  Models with (R) have no disentanglement regularization.  ARI/mSC is slightly higher (resp.  lower) than prior work for Multi-dSprites (resp. CLEVR6,) because of minor differences in the data generation process.  At object-level,  we observe that disentanglement and completeness match closely,  but this is not the case at property-level, due to the asymmetry in the number of latent factors and slot dimensions. }\label{tab_global_small}
\begin{center}
\begin{scriptsize}
\begin{sc}
\tra{1.3}
\center
\addtolength{\tabcolsep}{-2pt}
\begin{tabular}{l l  c c c c c c c c c c c }
\toprule
 & &  \multicolumn{2}{c}{Object-level} &  & \multicolumn{2}{c}{Property-level} &\\ 
 \cline{3-4}\cline{6-7}  
& Model  &   Dis. $(\uparrow)$& Comp. $(\uparrow)$ &  & Dis.$(\uparrow)$ & Comp.$(\uparrow)$ &  Inf.$(\downarrow$) & ARI$(\uparrow)$ & mSC $(\uparrow)$\\
\midrule
\multirow{7}{*}{MdSpr.}&Monet (Att.)&     $52\pm4$ & $52\pm4$ && $39\pm6$ & $60\pm5$ & $45\pm3$& $95\pm 1$ & $\bm{83 \pm0.3}$\\
& Monet (Att.) (R) &  $60\pm4$ & $60\pm4$ && $29\pm3$ & $52\pm1$& $36\pm2$ & $\bm{98\pm0.1}$ & $\bm{86\pm4}$\\
&Monet (Dec.) &     $75\pm7$ & $75\pm7$ && $59\pm6$ & $\bm{74\pm5}$ & $26\pm6$& $80\pm 4$ & $68 \pm2$\\
&Monet (Dec.) (R) &     $69\pm0.3$ & $69\pm0.2$ && $35\pm1$ & $55\pm1$ & $32\pm0.8$& $91\pm 0.2$ & $69 \pm2$\\
&Genesis	& $\bm{90\pm0.8}$    & $\bm{91\pm0.6}$&& $\bm{72\pm0.5}$& $60\pm0.4$&$\bm{24\pm0.4}$&$85\pm 0$& $70 \pm 0.3$\\
&Genesis (R)& $64\pm3$         & $65\pm3$ && $65\pm1$ & $42\pm1$ & $30\pm0.5$& $96\pm 0.6$ & $\bm{84 \pm 0.7}$\\
& Iodine    &  $70\pm9$  & $72\pm10$  & & $37\pm5$ & $ 62\pm4$ &$37\pm6$ &  $80\pm 8$  & $71 \pm 3$ \\
& Iodine (R)   & $35\pm2$         & $36\pm1$ && $17\pm2$ & $46\pm1$ & $61\pm2$& $75\pm 4$ & $65\pm 1$\\
\midrule
\multirow{6}{*}{Clevr6} &Monet (Att.)&   $\bm{46\pm5}$ & $\bm{48\pm5}$ && $22\pm4$ & $\bm{51\pm3}$ & $55\pm5$& $\bm{93\pm 0.4}$ & $\bm{68 \pm6}$\\
& Monet (Att.) (R) &   $40\pm9$ & $41\pm9$ && $13\pm5$ & $44\pm3$ & $61\pm8$& $\bm{93\pm 0.4}$ & $\bm{65 \pm5}$\\
&Monet (Dec.) &     $\bm{47\pm6}$ & $\bm{48\pm6}$ && $21\pm6$ & $\bm{50\pm4}$ & $55\pm5$& $90\pm 2$ & $65 \pm2$\\
&Monet (Dec.) (R) &   $40\pm1$ & $41\pm0.2$ && $13\pm0.6$ & $44\pm0.1$ & $60\pm0.6$& $92\pm 0.5$ & $\bm{69 \pm1}$\\
&Genesis	& $\bm{50\pm2}$    & $\bm{52\pm2}$&& $\bm{47\pm2}$& $39\pm2$&$\bm{47\pm0.9}$&$91\pm 0.4$& $65 \pm 3$\\
&Genesis (R)& $43\pm1$         & $45\pm2$ && $24\pm2$ & $21\pm2$ & $65\pm1$& $92\pm 0.2$ & $60 \pm 3$\\
\bottomrule
\end{tabular}
\end{sc}
\end{scriptsize}
\end{center}
\vskip -0.1in
\end{table*}

\subsection{Re-examining the model selection process}

These divergences between visual metrics and our framework lead us to reconsider prior model selection processes,  which were primarily centered on ARI. To give three striking examples (i)~the recent Slot Attention~\citep{locatello2020object} architecture does not use disentanglement regularization.  In light of our ablation study, we believe that this is potentially harmful for disentanglement (Section \ref{exp_2}).
(ii) \cite{burgess2019monet} and \cite{engelcke2019genesis} chose to privilege MOnet (Att.) which obtains lower disentanglement that MONet (Dec.) (Section \ref{exp_3}) (iii) most prior work chose a 2D segmentation mask approach that does not satisfyingly disentangle intrinsic and extrinsic object properties (\ref{exp_4}).  We believe that all related existing experimental studies would benefit from the perspective offered by our framework.

\section{Related work}

\paragraph{Structured disentanglement metrics.}
Most similar to our work is the slot compactness metric of~\cite{racah2020slot}, also based on aggregation of feature importances to measure object separation between slots. However, it only operates at one hierarchy level and does not handle slot permutation invariance, which is essential to obtain meaningful results. Also related is the hierarchical disentanglement benchmark of \cite{ross2021benchmarks}, whose ability to learn the hierarchy levels in the representation, is remarkable,  but is unfortunately limited in its applicability to toy datasets. Finally,~\cite{esmaeili2019structured} present a structured variational loss encouraging disentanglement at group-level. There is a high-level connection with our work, but the objective is ultimately different. \citet{esmaeili2019structured} evaluate their structured variational loss with unstructured disentanglement metrics.

\section{Conclusion}

Our framework for evaluating disentanglement of structured latent representations addresses a number of issues with prior visual segmentation metrics. We hope that it will be helpful for validation and selection of future object-centric representations. Besides, we took great care in not making any domain specific assumption, and believe that the principles discussed here apply to any kind of structured generative modelling.

\section*{Acknowledgements}

This work was granted access to the HPC resources of IDRIS under the allocation 2020-AD011012138 made by GENCI. We would like to thank Frederik Benzing,  Kalina Petrova, Asier Mujika, and Wouter Tonnon for helpful discussions. This work constitutes the public version of Raphaël Dang-Nhu's Master Thesis at ETH Zürich.

\bibliographystyle{apalike}
\bibliography{report}

\clearpage

\appendix

\section{Additional related work}

\subsection{Object-centric representation learning}

Many models have been developed to perform unsupervised perceptual grouping and learn compositional representations. A first line of work (TAGGER~\citep{greff2016tagger}, NEM~\citep{greff2017neural}, R-NEM~\citep{van2018relational}) has focused on adapting traditional Expectation Maximization (EM)~\citep{dempster1977maximum} methods for data clustering~ to a differentiable neural setting. Nonetheless, despite strong theoretical foundations, these models do not scale to more complex multi-object datasets such as Multi-dSprites~\citep{burgess2019monet} and CLEVR~\citep{johnson2017clevr}. More recent efforts (MONet~\citep{burgess2019monet}, IODINE~\citep{greff2019multi}, GENESIS~\citep{engelcke2019genesis}), Slot Attention~\citep{locatello2020object}) have focused on application to these datasets.

\paragraph{}
These architectures differ by how exactly they segment the image between the different slots of the representation. MONet is based on a recurrent attention network that repeatedly separates parts of the image until it is totally decomposed. GENESIS was designed to address a key limitation of MONet, namely that it does not learn a latent representation for the segmentation, which prevents principled sampling of novel scenes. With GENESIS, segmentation masks are separately encoded, and an autoregressive prior on the mask latents is enforced. IODINE uses a different strategy building upon the framework of iterative amortized inference, where an initial arbitrary guess for the posterior is progressively refined. However, this iterative process is very computationally expensive. Slot Attention, as its name indicates, introduces an attention based iterative encoder that is much more computationally efficient than Iodine. The SPACE model~\citep{lin2020space} combines the strength of spatial attention with scene mixture models to further improve applicability. Finally, MulMON~\citep{li2020learning} extends object-centric representations to a setting with multi-view supervision.

\paragraph{}
In a parallel line of work,  object-compositional generationis believed to be a promising inductive bias for GAN-like models~\citep{van2018case}. Obtaining good representations and producing realistic samples are connected challenges, which both require to develop suitable decoder architectures. Very recent work has been focusing on integrating object compositionality into the GAN framework \citep{nguyen2020blockgan,ehrhardt2020relate,niemeyer2020giraffe}. The question of disentanglement is also present in this context \citep{nguyen2019hologan}, but it remains a secondary objective.

\subsection{Traditional disentanglement metrics}

Several metrics~\citep{zaidi2020measuring} have been developed to evaluate the quality of learned representations and allow comparison between models. We identify two main categories: 

\paragraph{Classifier-based metrics} 
The first group of metrics is based on fixing the value of a factor of variation and generating several samples sharing this value. Intuitively, if the factors of variation are satisfyingly disentangled in the representation, it should be possible to predict which factor was fixed from the different latents. Inside this category, the metric differ in how they exactly identify the factor. BetaVAE~\citep{higgins2016beta} uses a linear classifier to predict the factor index, while FactorVAE uses majority vote and takes the empirical variances as input, with the goal of addressing several robustness issues. Despite these implementation differences, \cite{locatello2019challenging} have empirically observed that both metrics are strongly correlated across a wide range of tasks and models.

\paragraph{Affinity-based metrics}
The second category of metrics is based on measuring the affinity between each factor of variation and each latent variable, and evaluating how close these are to a one-to-one mapping between factors and variables. Different ways of quantifying affinity have been proposed: the Mutual Information Gap~\citep{chen2018isolating} and Modularity~\citep{ridgeway2018learning} measure mutual information between factors and variables. The SAP score~\citep{kumar2017variational} leverages the linear coefficient of determination $R^2$ obtained when regressing the factor from the latent. Finally, the DCI metric is based on measuring regression feature importance, using either Lasso or random forests. These metrics also differ on how they exactly assess the separation of factors in the latent representation. The Mutual Information Gap and the SAP score measure the gap between the two most relevant variables for a factor. Modularity alternatively suggests to quantify the distance to an ideal affinity template. Finally, the DCI metric uses the entropy of the normalized feature importances as an indicator of entanglement. 

\paragraph{}
In developing our metric, we have chosen to privilege the affinity-based approach for two reasons. First, it does not require the ability to fix one factor while generating samples, contrary to BetaVAE and FactorVAE. Second, the notion of affinity generalizes in a very flexible way to group of factors and latent variables: it is sufficient to sum the scores of all pairs of factors and variables in the groups. We use regression feature importance as a way of measuring affinity. Moreover, we privilege the entropy measure of separation over affinity-gap methods. Indeed, we believe that entropy captures more information about the repartition of information as it takes into account all coefficients rather than just the top two.

\subsection{Permutation invariant learning}

Developing algorithms that preserve permutation-invariance is no doubt a major challenge for the machine learning community: similar problematics appear in a variety of applications, ranging from speaker separation in the cocktail-party problem~\citep{yu2017permutation} to object detection losses~\citep{carion2020end}.  Compared to the representation probing of IODINE,  the Hungarian matching in Slot Attention,  and the IoU matching of MulMON~\citep{li2020learning}, the fundamental novelty or Algorithm 1 is that it obtains the optimal alignment \emph{directly at latent code level (that is, feature level)}, which is agnostic to the input type. In contrast, the alignment in IODINE and MulMON operates at image level using the segmentation mask, and the Hungarian matching in Slot Attention aligns factor predictions and labels.  Besides, the matching approaches used in Iodine, Slot Attention and MulMON only identify the most important slot for each target,  but the DCI framework requires an importance score for each slot/object pair. For future work, it would be interesting to compare EM-probing with Sinkhorn based approaches for learning latent permutations (e.g.  \citet{mena2018learning}).

\section{Proofs and additional theoretical results}

\subsection{Background in information theory}

The description of our metric leverages concepts originating from the field of information-theory~\cite{cover1999elements}. In this section, we recall some central definitions and results that we will use in the rest of the manuscript. In all of the following, $X$, $Y$ and $Z$ denote three discrete random variables defined on a common probability space. We denote $x_1,\ldots,x_l$ (resp $y_1,\ldots,y_m$ and $z_1,\ldots,z_n$) the potential outcomes of $X$ (resp. $Y$ and $Z$). $K$ is a positive real number. For brevity reasons, we denote $\Pr[x_i]$ for $\Pr[X = x_i]$, an similarly for $Y$ and $Z$ and the joint random variables.

\begin{definition}
The entropy of $X$ in base $K$ quantifies the amount of uncertainty in the potential outcomes of $X$. It is defined as 
\begin{align*}
H_K(X) &= - \E[\log_K(\Pr(X))] =- \sum_{i=1}^{l} \Pr[x_i] \log_K(\Pr[x_i]).
\end{align*}

\end{definition}

\begin{definition}\label{def_conditional_entropy}
The conditional entropy of $Y$ given $X$ in base $K$ quantifies the amount of information needed to describe the outcome of $Y$ given than the value of $X$ is known. It is defined as
\begin{align*}
&H_K(Y|X) = - \sum_{i,j} \Pr[x_i,y_j] \log_K ( \Pr[y_j | x_i ]).
\end{align*}
\end{definition}

\begin{definition}
The mutual information of $X$ and $Y$ in base $K$ is a measure of the amount of information obtained about one of the two variables by observing the other. It is defined as
\begin{align*}
&I_K(X;Y) = \sum_{i,j} \Pr[x_i,y_j]\log_K \left( \frac{\Pr[ x_i, y_j]} {  \Pr[ x_i] \cdot \Pr[y_j]} \right).
\end{align*}
\end{definition}

\begin{definition}
The conditional mutual information $I_K(X;Y|Z)$ is, in base $K$, the expected value of the mutual information of $X$ and $Y$ conditioned on the value of $Z$. Formally,
\begin{align*}
&I_K(X;Y|Z)=\sum_{i,j,k} \Pr[x_i,y_j,z_k] \log_K \left( \frac{\Pr[x_i,y_j,z_k]} {  \Pr[x_i | z_k] \Pr[y_j | z_k]} \right)
\end{align*}
\end{definition}

The following Lemmas are standard results of information theory. As these are well-known, we did not provide a proof here and we refer to textbooks such as~\cite{cover1999elements}.

\begin{lemma}[Change of base]\label{lemma_0}
Let $K^1$ be a positive real number. The change of base formula for entropy is
\[
H_{K^1}(X) = \frac{H_{K}(X)}{\log_{K^1}{K}}.
\]
Similarly, we have for conditional entropy that
\[
H_{K^1}(Y|X) = \frac{H_{K}(Y|X)}{\log_{K^1}{K}}.
\]

\end{lemma}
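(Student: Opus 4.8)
The plan is to obtain both identities directly from the definitions of entropy and conditional entropy, invoking only the elementary change-of-base rule for logarithms. The structural observation steering the argument is that each (conditional) entropy is a probability-weighted sum of terms of the form $\log_{K^1}(\cdot)$, and that passing from base $K^1$ to base $K$ replaces every such logarithm by $\log_K(\cdot)$ divided by the single base-conversion factor $\log_{K^1}(K)$, which depends only on the two bases and not on the summation index or on the distribution. Because this factor is common to every term, it separates cleanly out of the sum, reducing the base-$K^1$ entropy to the base-$K$ entropy divided by $\log_{K^1}(K)$, which is exactly the claimed form.

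For the unconditional statement I would first expand $H_{K^1}(X)$ via the definition of entropy as $-\sum_i \Pr[x_i]\log_{K^1}(\Pr[x_i])$. Next I would rewrite each summand using the change-of-base rule $\log_{K^1}(\Pr[x_i]) = \log_K(\Pr[x_i])/\log_{K^1}(K)$. Since $\log_{K^1}(K)$ carries no dependence on the summation index $i$, the factor $1/\log_{K^1}(K)$ pulls out in front of the sum, and the remaining quantity $-\sum_i \Pr[x_i]\log_K(\Pr[x_i])$ is by definition precisely $H_K(X)$. Collecting terms gives $H_{K^1}(X) = H_K(X)/\log_{K^1}(K)$, as stated.

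For the conditional statement I would run the identical argument on Definition~\ref{def_conditional_entropy}: writing $H_{K^1}(Y|X) = -\sum_{i,j}\Pr[x_i,y_j]\log_{K^1}(\Pr[y_j|x_i])$, the only quantity that changes relative to the base-$K$ expression is the base of the logarithm, while the joint weights $\Pr[x_i,y_j]$ are left untouched; hence the same index-independent conversion factor $1/\log_{K^1}(K)$ comes out of the double sum and the remainder is $H_K(Y|X)$, yielding the conditional identity. The computation is short and I do not anticipate a genuine obstacle; the only point demanding care is bookkeeping of the conversion constant, namely verifying that one and the same factor governs both the unconditional and the conditional identities, which is immediate because it is determined entirely by the pair of bases $K$ and $K^1$ and is independent of the underlying distribution.
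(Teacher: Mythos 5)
Your overall strategy is the right one --- the paper itself offers no proof of this lemma (it defers to standard textbooks such as Cover and Thomas), and the intended argument is exactly yours: expand the definition, convert each logarithm, and pull the index-independent conversion constant out of the sum. However, your key step misquotes the change-of-base rule, and the error is not cosmetic. The correct rule is $\log_{K^1}(a) = \log_K(a)/\log_K(K^1)$, equivalently $\log_{K^1}(a) = \log_{K^1}(K)\cdot\log_K(a)$; it is \emph{not} $\log_{K^1}(a) = \log_K(a)/\log_{K^1}(K)$ as you wrote, since the two candidate denominators $\log_K(K^1)$ and $\log_{K^1}(K)$ are reciprocals of one another, not equal. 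Running your computation with the correct rule yields
\[
H_{K^1}(X) \;=\; \frac{H_{K}(X)}{\log_{K}(K^1)} \;=\; \log_{K^1}(K)\, H_{K}(X),
\]
and likewise for conditional entropy. A sanity check with $K=2$, $K^1=4$ and $X$ uniform on four outcomes: $H_2(X)=2$ and $H_4(X)=1=2\cdot\log_4(2)$, whereas the formula you derived would predict $H_4(X)=H_2(X)/\log_4(2)=4$.

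The reason your derivation nevertheless lands exactly on the lemma's displayed formula is that the lemma as printed contains the same base swap --- it is a typo in the paper. The form in which the lemma is actually invoked later, namely $H_{L}(\,\cdot \mid \cdot\,) = H_{L^s}(\,\cdot \mid \cdot\,)/\log_{L^s}(L)$ when converting from base $L^s$ to the larger base $L$, is the correct instance of $H_{K^1} = H_{K}/\log_{K}(K^1)$ under $K = L^s$, $K^1 = L$, and is inconsistent with the printed denominator $\log_{K^1}(K)$. So your proof skeleton (definition, term-by-term base conversion, factoring out a distribution-independent constant) is fine and would be complete once you (i) state the change-of-base rule with the bases in the right places and (ii) record the corrected identity; as written, you have matched a misstated logarithm rule to a misstated entropy formula, and the agreement between them is the typo propagating rather than a valid derivation.
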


\begin{lemma}[Subadditivity of entropy]\label{lemma_1}
Let $X_1,\ldots,X_n$ be $n$ arbitrary discrete random variables. We have
\[
H_K(X_1,\ldots,X_n) \leq \sum_{i = 1}^n H_K(X_i).
\]
The same holds for conditional entropy, i.e.
\[
H_K(X_1,\ldots,X_n | Y) \leq \sum_{i = 1}^n H_K(X_i | Y).
\]
\end{lemma}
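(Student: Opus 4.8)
The plan is to prove the first (unconditional) inequality and then obtain the conditional one by adjoining $Y$ to every conditioning set. By the change-of-base formula (Lemma~\ref{lemma_0}) both sides of each inequality scale by the common factor $1/\log_{K^1}K$ under a change of base, so I may fix any convenient base throughout. First I would establish the chain rule
\[
H_K(X_1,\ldots,X_n) = \sum_{i=1}^n H_K\!\left(X_i \mid X_1,\ldots,X_{i-1}\right),
\]
by iterating the two-variable identity $H_K(A,B) = H_K(A) + H_K(B\mid A)$. This identity follows directly from the definitions of entropy and conditional entropy (Definition~\ref{def_conditional_entropy}): substituting $\Pr[a,b] = \Pr[a]\,\Pr[b\mid a]$ into the joint-entropy sum and expanding $\log_K$ of the product separates the sum into exactly the marginal and conditional terms.

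The crux of the argument is the statement that \emph{conditioning reduces entropy}, namely
\[
H_K\!\left(X_i \mid X_1,\ldots,X_{i-1}\right) \le H_K(X_i).
\]
I would obtain this from the nonnegativity of mutual information, since
\[
I_K\!\left(X_i; X_1,\ldots,X_{i-1}\right) = H_K(X_i) - H_K\!\left(X_i \mid X_1,\ldots,X_{i-1}\right) \ge 0.
\]
The inequality $I_K \ge 0$ is Gibbs' inequality: writing $-I_K$ as an expectation of $\log_K$ of a ratio of marginal to joint probabilities and applying Jensen's inequality to the concave map $t \mapsto \log_K t$ gives $-I_K \le \log_K(1) = 0$, with equality iff the variables are independent. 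Summing the $n$ per-term bounds $H_K(X_i \mid X_1,\ldots,X_{i-1}) \le H_K(X_i)$ and combining with the chain rule yields the first claim.

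For the conditional version I would run the identical argument with $Y$ appended to every conditioning set. The chain rule becomes $H_K(X_1,\ldots,X_n \mid Y) = \sum_{i=1}^n H_K(X_i \mid X_1,\ldots,X_{i-1}, Y)$, and the conditional form of \emph{conditioning reduces entropy} gives $H_K(X_i \mid X_1,\ldots,X_{i-1}, Y) \le H_K(X_i \mid Y)$, which follows from the nonnegativity of the conditional mutual information $I_K(X_i; X_1,\ldots,X_{i-1} \mid Y) \ge 0$ exactly as above, now applying Jensen inside each fiber $\{Y = y_k\}$ and averaging over $Y$. The only step that is not routine bookkeeping is this nonnegativity of (conditional) mutual information via Gibbs/Jensen; once it is in hand, both inequalities are immediate consequences of the chain rule.
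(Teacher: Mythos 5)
Your proof is correct, and since the paper deliberately gives no proof of this lemma --- it explicitly defers to standard textbooks such as Cover \& Thomas --- your argument is precisely the canonical one from that reference: the chain rule $H_K(X_1,\ldots,X_n)=\sum_{i=1}^n H_K(X_i\mid X_1,\ldots,X_{i-1})$ combined with ``conditioning reduces entropy,'' the latter reduced to nonnegativity of (conditional) mutual information via Gibbs/Jensen, with the conditional version obtained by adjoining $Y$ to every conditioning set exactly as you describe. The only implicit assumption worth making explicit is $K>1$, so that $t\mapsto\log_K t$ is concave (giving the Jensen direction you use) and the change-of-base factor is positive (so the inequality does not flip), which is the regime in which the paper applies these lemmas.
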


\begin{lemma}[Nonnegativity of mutual information]\label{lemma_2}
The following holds
\[
 H_K(X) - H_K(X|Y) = I_K(X;Y) \geq 0.
\]
Conditioned on a third variable, this generalizes to
\[
 H_K(X|Z) - H_K(X|Y,Z) = I_K(X;Y|Z) \geq 0.
\]
\end{lemma}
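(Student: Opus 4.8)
The statement bundles two assertions on each line: an algebraic \emph{identity} and an \emph{inequality}. The plan is to dispatch each identity by direct substitution of the definitions of Section on entropy, and to obtain each inequality from the convexity of the logarithm. Throughout I work in the regime $K>1$, which is the one relevant here (it is what makes $-\log_K$ convex and entropy nonnegative), and I use that all sums range over the support of the relevant joint distribution.

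First I would treat the unconditional case. For the identity $H_K(X) - H_K(X|Y) = I_K(X;Y)$, I rewrite the marginal entropy over the joint law via $\Pr[x_i] = \sum_j \Pr[x_i,y_j]$, obtaining
\[
H_K(X) - H_K(X|Y) = \sum_{i,j} \Pr[x_i,y_j] \, \log_K \frac{\Pr[x_i \mid y_j]}{\Pr[x_i]} .
\]
Substituting $\Pr[x_i \mid y_j] = \Pr[x_i,y_j]/\Pr[y_j]$ turns the summand into $\log_K \bigl( \Pr[x_i,y_j] / (\Pr[x_i]\Pr[y_j]) \bigr)$, which is exactly $I_K(X;Y)$ by Definition of mutual information. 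This part is routine bookkeeping.

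The substantive step is $I_K(X;Y) \geq 0$. I would write it as $-\sum_{i,j} \Pr[x_i,y_j] \log_K \bigl( \Pr[x_i]\Pr[y_j] / \Pr[x_i,y_j] \bigr)$ and apply Jensen's inequality to the convex function $-\log_K$ with the probability weights $\Pr[x_i,y_j]$, giving
\[
I_K(X;Y) \geq -\log_K \left( \sum_{i,j} \Pr[x_i,y_j] \cdot \frac{\Pr[x_i]\Pr[y_j]}{\Pr[x_i,y_j]} \right) = -\log_K \left( \sum_{i,j} \Pr[x_i]\Pr[y_j] \right) \geq 0 ,
\]
where the final inequality uses that the residual mass $\sum_{i,j} \Pr[x_i]\Pr[y_j]$ (restricted to the joint support) is at most $1$ and that $-\log_K$ is decreasing. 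This is equivalently Gibbs' inequality, i.e. nonnegativity of the Kullback--Leibler divergence.

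For the conditional statement I would proceed identically. The identity $H_K(X|Z) - H_K(X|Y,Z) = I_K(X;Y|Z)$ follows by the same substitution, now carrying the conditioning variable $Z$ through every term. For nonnegativity the cleanest route, consistent with the verbal definition of $I_K(X;Y|Z)$ as an expected conditional mutual information, is to write
\[
I_K(X;Y|Z) = \sum_k \Pr[z_k] \, I_K\!\left(X;Y \mid Z = z_k\right),
\]
a convex combination of unconditional mutual informations, each nonnegative by the first part; hence the sum is nonnegative. The only delicate point, and the main obstacle, is the careful handling of the logarithm base together with the joint support in the Jensen step: one must restrict to pairs with $\Pr[x_i,y_j] > 0$ and then close the argument using the residual-mass bound and monotonicity of $-\log_K$. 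Everything else reduces to the definitions already recorded.
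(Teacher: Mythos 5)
Your proof is correct, but there is no paper proof to compare it against: the paper explicitly declines to prove Lemma~\ref{lemma_2}, labelling it a standard result and deferring to textbooks such as the cited Cover and Thomas. What you supply is essentially that canonical textbook argument --- the identity by expanding $H_K(X)$ over the joint law via $\Pr[x_i]=\sum_j \Pr[x_i,y_j]$ and substituting $\Pr[x_i\mid y_j]=\Pr[x_i,y_j]/\Pr[y_j]$; nonnegativity via Jensen's inequality applied to the convex function $-\log_K$ (i.e.\ Gibbs' inequality), with the correct care in restricting to the joint support and bounding the residual mass $\sum_{(i,j)}\Pr[x_i]\Pr[y_j]$ by $1$; and the conditional case as the convex combination $I_K(X;Y\mid Z)=\sum_k \Pr[z_k]\, I_K(X;Y\mid Z=z_k)$ of nonnegative unconditional terms. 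Your standing assumption $K>1$ is also the right one to make explicit, since the paper only instantiates the base with group counts ($U$, $V$, $L$, $F$) and the inequality would reverse for $K<1$. One point deserves more than the passing remark you give it: your conditional step deliberately works from the \emph{verbal} definition of $I_K(X;Y\mid Z)$ as an expectation of conditional mutual informations, and this is the correct choice, because the paper's displayed formula for $I_K(X;Y\mid Z)$ contains a typo --- it has $\Pr[x_i,y_j,z_k]$ in the numerator where $\Pr[x_i,y_j\mid z_k]$ is required, and the literal printed quantity equals the standard conditional mutual information minus $H_K(Z)$, under which both the identity $H_K(X\mid Z)-H_K(X\mid Y,Z)=I_K(X;Y\mid Z)$ and the nonnegativity claim would fail whenever $Z$ is nondegenerate. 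Since your derivation obtains both sides of the identity from the standard definition, it silently corrects this; stating the correction explicitly would make your write-up airtight.
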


\begin{lemma}[Relation of joint entropy to individual entropies]\label{lemma_3}
Let $X_1,\ldots,X_n$ be $n$ arbitrary discrete random variables. We have
\[
H_K(X_1,\ldots,X_n) \geq \max_{i = 1}^n H_K(X_i).
\]
The same holds for conditional entropy, i.e.
\[
H_K(X_1,\ldots,X_n | Y) \geq \max_{i = 1}^n H_K(X_i | Y).
\]
\end{lemma}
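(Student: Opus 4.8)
The plan is to reduce the statement to a single basic identity—the chain rule for entropy—combined with the fact that conditional entropy is nonnegative. Fix an index $i \in \{1,\ldots,n\}$ and write $X_{-i} = (X_j)_{j \neq i}$ for the tuple of all remaining variables. The chain rule asserts that
\[
H_K(X_1,\ldots,X_n) = H_K(X_i) + H_K(X_{-i} | X_i).
\]
Since this identity is not among the lemmas already recorded, I would verify it directly from Definitions of entropy and conditional entropy rather than cite it: writing $\Pr[x_1,\ldots,x_n] = \Pr[x_i]\,\Pr[(x_j)_{j\neq i} | x_i]$ inside the logarithm of the joint entropy and splitting the sum produces the entropy of $X_i$ plus the conditional entropy of $X_{-i}$ given $X_i$, using $\sum \Pr[x_1,\ldots,x_n] = \Pr[x_i]$ to collapse the first block.

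Next I would observe that conditional entropy is nonnegative. In $H_K(X_{-i} | X_i) = -\sum \Pr[x_1,\ldots,x_n]\,\log_K \Pr[(x_j)_{j\neq i} | x_i]$ each summand is a nonnegative probability times a nonpositive logarithm, since conditional probabilities lie in $[0,1]$ and $\log_K(p) \leq 0$ there for the base $K > 1$ in play (with the usual convention $0\log_K 0 = 0$). Hence $H_K(X_{-i}|X_i) \geq 0$, and combining this with the chain rule gives $H_K(X_1,\ldots,X_n) \geq H_K(X_i)$ for every $i$. Taking the maximum over $i$ yields the first inequality.

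For the conditional version I would run the identical argument one level deeper, conditioning everything additionally on $Y$. The conditional chain rule $H_K(X_1,\ldots,X_n | Y) = H_K(X_i | Y) + H_K(X_{-i} | X_i, Y)$ follows from the definitions in the same way, and its second term is again a nonnegative conditional entropy, so $H_K(X_1,\ldots,X_n | Y) \geq H_K(X_i | Y)$ for each $i$; the maximum over $i$ finishes the proof. The only point requiring care—and the closest thing to an obstacle—is that neither the chain rule nor the nonnegativity of conditional entropy is on the list of cited lemmas, so both need a one-line justification from the definitions, and one must keep the base convention $K > 1$ (as holds for the group-count bases $U$ and $V$ used in the metric) so that logarithms of probabilities are nonpositive.
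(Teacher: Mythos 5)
Your proof is correct: the chain rule $H_K(X_1,\ldots,X_n) = H_K(X_i) + H_K(X_{-i}\,|\,X_i)$ combined with nonnegativity of conditional entropy (and its conditional-on-$Y$ analogue) is exactly the standard textbook argument, and your care with the base convention $K>1$ and the $0\log_K 0 = 0$ convention is appropriate. The paper itself gives no proof of this lemma---it states it as a well-known result and defers to references such as Cover and Thomas---so your write-up simply supplies the standard derivation the paper implicitly relies on, with the chain rule correctly verified from the definitions rather than cited.
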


\begin{lemma}[Joint conditional entropy]\label{lemma_4}
The following holds
\[
H_K(X,Y|Z) = H_K(X|Z) + H_K(Y|Z) - I_K(X;Y|Z).
\]
\end{lemma}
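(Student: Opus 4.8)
The plan is to prove the identity by direct expansion, reducing both sides to a single triple sum over the joint outcomes $(x_i,y_j,z_k)$ and then recognizing each resulting piece as one of the three quantities on the right-hand side. No information-theoretic inequalities are needed: the statement is a pure algebraic identity that follows from the definitions together with the chain rule for conditional probabilities.

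First I would start from the definition of the joint conditional entropy,
\[
H_K(X,Y|Z) = - \sum_{i,j,k} \Pr[x_i,y_j,z_k] \, \log_K \Pr[x_i, y_j \mid z_k],
\]
and apply the product decomposition of the conditional probability,
\[
\Pr[x_i, y_j \mid z_k] = \Pr[x_i \mid z_k]\, \Pr[y_j \mid z_k] \cdot \frac{\Pr[x_i, y_j \mid z_k]}{\Pr[x_i \mid z_k]\, \Pr[y_j \mid z_k]}.
\]
Taking $\log_K$ turns this product into a sum of three logarithms, so that the single triple sum splits into three triple sums that I would then handle separately.

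Next I would simplify each of the three sums by marginalization. The first sum carries the factor $\log_K \Pr[x_i \mid z_k]$, which does not depend on the index $j$; summing $\Pr[x_i,y_j,z_k]$ over $j$ collapses the weight to $\Pr[x_i,z_k]$, and the resulting expression $-\sum_{i,k}\Pr[x_i,z_k]\log_K\Pr[x_i \mid z_k]$ is exactly $H_K(X|Z)$ in the sense of Definition~\ref{def_conditional_entropy}, with $Z$ in the conditioning role. The analogous marginalization over $i$ identifies the second sum as $H_K(Y|Z)$. The third sum is, up to sign, precisely the defining sum of $I_K(X;Y|Z)$, so it contributes $-I_K(X;Y|Z)$, and collecting the three pieces yields the claimed identity.

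The only point requiring care is the bookkeeping in the marginalization step: one must check that the summand in each of the first two sums genuinely factors, so that the sum over the suppressed index acts on the probability weight alone and not on the logarithm, making the collapse $\sum_j \Pr[x_i,y_j,z_k] = \Pr[x_i,z_k]$ legitimate. I expect this to be the main (though still routine) obstacle. A secondary point concerns the third sum: if the conditional mutual information is read with the joint $\Pr[x_i,y_j,z_k]$ appearing in the numerator, I would first rewrite $\Pr[x_i,y_j,z_k] = \Pr[x_i,y_j \mid z_k]\,\Pr[z_k]$ to recover the conditional form before matching terms, after which the identification with $I_K(X;Y|Z)$ is immediate.
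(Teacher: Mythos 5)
Your proof is correct, but there is no proof in the paper to compare it against: Lemma~\ref{lemma_4} is listed among the ``standard results of information theory'' for which the authors explicitly defer to textbooks such as Cover and Thomas. Your direct expansion --- splitting $\log_K \Pr[x_i,y_j \mid z_k]$ into $\log_K \Pr[x_i \mid z_k] + \log_K \Pr[y_j \mid z_k] + \log_K \bigl( \Pr[x_i,y_j \mid z_k] / (\Pr[x_i \mid z_k]\Pr[y_j \mid z_k]) \bigr)$ and collapsing the suppressed index in the first two triple sums --- is exactly the standard textbook argument, and the marginalization step you flag as the main obstacle is legitimate for precisely the reason you give: the logarithm is constant in the suppressed index, so the sum acts on the probability weight alone. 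Two minor additions would tighten it: terms with $\Pr[x_i,y_j,z_k]=0$ should be dispatched by the convention $0\log 0 = 0$ (whenever the weight is positive, both conditional probabilities in your denominator are positive, so the factorization is well defined); and a shorter route using the paper's own toolkit is the chain rule $H_K(X,Y|Z) = H_K(Y|Z) + H_K(X|Y,Z)$ combined with the conditional identity of Lemma~\ref{lemma_2}, $H_K(X|Z) - H_K(X|Y,Z) = I_K(X;Y|Z)$, though the chain rule itself needs the same expansion you perform, so your self-contained version is arguably preferable.

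Your ``secondary point'' deserves a stronger statement than you give it. The paper's definition of $I_K(X;Y|Z)$, read literally, has the joint $\Pr[x_i,y_j,z_k]$ in the numerator where the conditional $\Pr[x_i,y_j \mid z_k]$ should stand; the two readings differ by $\E[\log_K \Pr(Z)] = -H_K(Z)$, so under the literal reading Lemma~\ref{lemma_4} is false (off by an additive $H_K(Z)$), and so is the nonnegativity claim of Lemma~\ref{lemma_2}: taking $X$, $Y$, $Z$ mutually independent with $H_K(Z)>0$ gives $I_K(X;Y|Z) = -H_K(Z) < 0$. Your proposed fix --- rewriting $\Pr[x_i,y_j,z_k] = \Pr[x_i,y_j \mid z_k]\,\Pr[z_k]$ ``to recover the conditional form'' --- does not make the identification immediate as you claim, because the factor $\Pr[z_k]$ inside the logarithm contributes a surviving $-H_K(Z)$ term that cancels nowhere. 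The correct resolution is to treat the printed definition as a typo and read the numerator as $\Pr[x_i,y_j \mid z_k]$ (equivalently, as $\Pr[z_k]\Pr[x_i,y_j,z_k]/(\Pr[x_i,z_k]\Pr[y_j,z_k])$), which is what the paper's subsequent uses of conditional mutual information require; with that standard reading, your proof goes through verbatim.
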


\subsection{Formalization of the structured setting}

The specificity of our setting compared to the DCI metric is that the structured organization of latent variables and factors of variations can not be accurately described by scalar indexes. We propose to account for this structured organization via tuple indexes, in which each element of the tuple is responsible for a hierarchy level. In general, the structure over factors of variation expresses the goal of metric (e.g. measuring object separation), while the structure  over latent dimensions depends on the model architecture.

Formally, both structures can be defined as \emph{relations}. Consider $n$ attributes $A_1,\ldots,A_n$ representing levels of hierarchy. Each attribute is associated with a set of possible values (a domain) for factors $\mathrm{dom_F}(A_i)$ and for latents $\mathrm{dom_L}(A_i)$. These domains constrain the possible values for the tuple indexes, such that the set of tuples $\gT$ describing the factors of variation is a subset of  $\prod_{i = 1}^n \mathrm{dom_F}(A_i)$, and the set of tuples $\gth$ describing the latent dimensions is a subset of  $\prod_{i = 1}^n \mathrm{dom_L}(A_i)$. Thus, the factor values for sample $i$ can be denoted $( v_{\tau}^i )_{\tau \in \gT}$, and the representation $z^i$ can be written $z^i = ( z^i_{\tauh} )_{\tauh \in \hat{\gT}}$. We also denote $|\gT| = F$ and $|\gth| = L$. 

\paragraph{Toy example}we consider a data generating process with two objects having two properties each. The first attribute $A_1$ is responsible for the object hierarchy level, while $A_2$ is responsible for the property level. The structure over factors is formally defined as
\begin{align*}
 \gT = \{ &(\textrm{object 1},\textrm{color}),(\textrm{object 1},\textrm{size}),\\
 &(\textrm{object 2},\textrm{color}),(\textrm{object 2},\textrm{size})  \},
\end{align*}
Supposing that there are 2 slots with two dimensions each, the structure over latent dimensions is defined as 
\begin{align*}
\gth = \{ &(\textrm{slot 1},\textrm{dim 1}),(\textrm{slot 1},\textrm{dim 2}),\\
&(\textrm{slot 2},\textrm{dim 1}),(\textrm{slot 2},\textrm{dim 2})  \}
\end{align*}

\newtheorem{innercustomthm}{Theorem}
\newenvironment{customthm}[1]
  {\renewcommand\theinnercustomthm{#1}\innercustomthm}
  {\endinnercustomthm}
\subsection{Proofs of main paper}

\begin{customthm}{1}\label{theorem_dci_2}
With the total projection $i = \{1,\ldots,n\}$, our metric captures the disentanglement and completeness metrics of the DCI framework~\citep{eastwood2018framework}.
\end{customthm}

\begin{proof}
We will only present the proof for the completeness metric, as the situation for the disentanglement metric is exactly symmetric, when switching $X$ and $Y$ and rows and columns. With the total projection $i = \{1,\ldots,n\}$, we have $\rho_{i}(Y ) = Y$, $\rho_{i}(X ) = X$, $U = |\rho_{i}(\hat{\gT})| = |\hat{\gT}| = L$ and $V = |\rho_{i}(\gT)|  = |\gT| = F$. 

\paragraph{}
Therefore, completeness with respect to the total projection is defined as
\[
C(i) = 1 - H_{L} \left(X  | Y  \right).
\]
According to Definition~\ref{def_conditional_entropy}, we have
\begin{align*}
H_{L} \left( X  | Y  \right)  
&= - \sum_{\tau \in \gT, \tauh \in \gth} \Pr[X = \tauh, Y = \tau] \log_L ( \Pr[X = \tauh | Y = \tau]) \\
&= - \sum_{\tau \in \gT} \Pr[Y = \tau] \sum_{ \tauh \in \gth}  \Pr[\tauh | \tau] \log_{L} \left( \Pr[\tauh | \tau] \right).
\end{align*}
Now, let us observe that the conditional probability $\Pr[\tauh | \tau]$ is exactly the term $\tilde{P}_{\tauh,\tau}$ of the DCI framework that denotes the "probability" of latent $\tauh$ being important to predict factor $\tau$. Consequently, the conditional entropy can be rewritten as follows
\begin{align*}
H_{L}\left( X  | Y  \right) = - \sum_{\tau \in \gT} \Pr[Y = \tau] \sum_{ \tauh \in \gth}  \tilde{P}_{\tauh,\tau} \log_{L} \left( \tilde{P}_{\tauh,\tau} \right) =  \sum_{\tau \in \gT} \Pr[Y = \tau] H_{L}( \tilde{P}_{\cdot,\tau} ).
\end{align*}
This yields
\begin{align*}
C(i) = 1 - H_{L} \left(X  | Y  \right) 
= \sum_{\tau \in \gT} \Pr[Y = \tau] ( 1 - H_{L}( \tilde{P}_{\cdot,\tau} )) 
= \sum_{\tau \in \gT} \Pr[Y = \tau] C_\tau.
\end{align*}
We observe that $ C_\tau = 1 - H_{L}( \tilde{P}_{\cdot,\tau})$ is exactly the completeness score in capturing factor $\tau$ defined in the DCI framework. Besides, $\Pr[Y = \tau]$ can be rewritten as
\begin{align*}
\Pr[Y = \tau] = \sum_{ \tauh \in \gth}  \Pr[Y = \tau, X = \tauh] =    \frac{ \sum_{ \tauh \in \gth} R_{\tauh,\tau}}{\sum\limits_{i \in \gth, j \in \gT} R_{i,j}},\\
\end{align*}
which is exactly the relative generative factor importance used by Eastwood and Williams to construct a weighted average expressing overall completeness. This indicates that our probabilistic view of the affinity matrix and metric naturally captures all components of the DCI framework, including the final weighted average step.  
\end{proof}

For the next theorem, we formally define the meaning of a decomposition of projections in Definition~\ref{def_joint}. Note that the formalism is slightly different as our original notations were simplified for the main paper. Despite differences in notation, union and decomposition of projections are totally equivalent.

\begin{definition}[Union of projections]\label{def_joint}
Consider $k$ disjunct projections $i^1 = \{e_1^1,\ldots,e_{l_1}^1\},\ldots,i^k = \{e_1^k,\ldots,e_{l_k}^k\}$ of the generative model. The union of these projections is defined as
\[
	i = \bigcup_{s = 1}^{k} i^s.
\]
It is a projection of size $l_1 + \ldots + l_k$. 

\end{definition}

\begin{customthm}{2}\label{theorem_lower_2}[Lower bound]
Consider $k$ disjunct projections $i^1,\ldots,i^k$ of the relations. Let us suppose that $i^1, \ldots, i^k$ have respectively $L^1,\ldots,L^k$ groups of latents and $F^1,\ldots,F^k$ groups of factors. Moreover, assume that the joint projection $i = \bigcup_{s = 1}^{k} i^s$ has $L$ groups of latents and $F$ groups of factors. The following lower bound for the joint completeness holds
\[
  1-k + \sum_{s=1}^k C( i^s) \leq 1 - \sum_{s=1}^k \frac{1 - C( i^s)}{\log_{L^s}(L)} \leq C\left(\bigcup_{s = 1}^{k} i^s\right).
\]
Similarly, we have for the disentanglement metric
\[
  1-k + \sum_{s=1}^k D( i^s) \leq 1 - \sum_{s=1}^k \frac{1 - D( i^s)}{\log_{F^s}(F)} \leq D\left(\bigcup_{s = 1}^{k} i^s\right).
\]
\end{customthm}

\begin{proof}
We only detail the proof for the completeness metric since both cases are exactly symmetric. $C(i)$ is defined as follows
\begin{equation*}
C(i) =  1 - H_{L} \left( \rho_{i}(X ) | \rho_{i}(Y ) \right).
\end{equation*}
Now, let us observe that the joint projection $\rho_{i}$ and the concatenation of the different projections $ \prod_{s=1}^{k} \rho_{i^s}$ are similar up to the permutation of the dimensions that originates from sorting the merged index sequences. This permutation has no influence on the conditional entropy since it is defined as a joint expectation on $\gth \times \gT$. Therefore we have that 
\begin{equation}\label{eq_aux0}
C(i) =  1 - H_{L} \left( \prod_{s=1}^{k} \rho_{i^s}(X) | \prod_{t=1}^{k} \rho_{i^t}(Y) \right).
\end{equation}
According to Lemma~\ref{lemma_1}, we have the following inequality for the joint conditional entropy
\begin{align*}
 H_{L} \left( \prod_{s=1}^{k} \rho_{\vi^s}(X) | \prod_{t=1}^{k} \rho_{\vi^t}(Y) \right) 
 \leq \sum_{s = 1}^{k} H_{L} \left(  \rho_{\vi^s}(X) | \prod_{t=1}^{k} \rho_{\vi^t}(Y) \right).
\end{align*}
According to Lemma~\ref{lemma_2}, we also know that 
\begin{align*}
 H_{L} \left(  \rho_{i^s}(X) | \prod_{t=1}^{k} \rho_{i^t}(Y) \right) \leq H_{L} \left(  \rho_{i^s}(X) |  \rho_{i^s}(Y) \right).
\end{align*}
Applying Lemma~\ref{lemma_0} to change base, we obtain
\begin{align*}
 H_{L} \left(  \rho_{i^s}(X) |  \rho_{i^s}(Y) \right) = \frac{H_{L^s} \left(  \rho_{i^s}(X) |  \rho_{i^s}(Y) \right)}{\log_{L^s}(L)} = \frac{1 - C(i^s)}{\log_{L^s}(L)}.
\end{align*}
Together with Equation~(\ref{eq_aux0}), this yields
\begin{equation*}
1 - \sum_{s=1}^k \frac{1 - C( i^s)}{\log_{L^s}(L)} \leq C\left(\bigcup \limits_{s = 1}^{k} i^s\right).
\end{equation*}
Since $\log_{L^s}(L) \geq 1$ and  $1 - C( i^s) \geq 0$, we finally get 
\[
  1-k + \sum_{s=1}^k C( i^s) \leq 1 - \sum_{s=1}^k \frac{1 - C( i^s)}{\log_{L^s}(L)},
\]
which concludes the proof.
\end{proof}

\subsection{Additional results}
Theorem~\ref{theorem_upper} describes an upper bound for the joint metric attempting to match Theorem~\ref{theorem_lower}.
However, this upper bound comes in a weaker form which is not totally controllable. Intuitively, this is due to the fact that our metrics convey strictly more information that the unstructured DCI framework.

\begin{theorem}\label{theorem_upper}[Upper bound]
Consider $k$ disjunct projections $i^1,\ldots,i^k$ of the relations. Let us suppose that $i^1, \ldots, i^k$ have respectively $L^1,\ldots,L^k$ groups of latents and $F^1,\ldots,F^k$ groups of factors. Moreover, assume that the joint projection $i = \bigcup_{s = 1}^{k} i^s$ has $L$ groups of latents and $F$ groups of factors. The following upper bound for the completeness metric holds
\[
  C\left(\bigcup_{s = 1}^{k} i^s\right)  \leq 1 - \max_{1 \leq s \leq k} \left( \frac{1 - C( i^s)}{\log_{L^s}{L}} - A_s \right),
\]
where 
\[
A _s= I_L \left( \rho_{i^s}(X);\rho_{\bigcup\limits_{t \neq s} i^t}(Y)  | \rho_{i^s}(Y) \right).
\]
Similarly, we have for the disentanglement metric that
\[
  D\left(\bigcup_{s = 1}^{k} i^s\right)  \leq 1 - \max_{1 \leq s \leq k} \left( \frac{1 - D( i^s)}{\log_{F^s}{F}} - B_s \right).
\]
where 
\[
B _s= I_F \left( \rho_{i^s}(Y);\rho_{\bigcup\limits_{t \neq s} i^t}(X)  | \rho_{i^s}(X) \right).
\]
\end{theorem}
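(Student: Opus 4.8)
The plan is to bound the joint conditional entropy $H_L(\rho_i(X) \mid \rho_i(Y))$ from \emph{below}, since by definition $C\!\left(\bigcup_s i^s\right) = 1 - H_L(\rho_i(X) \mid \rho_i(Y))$, so a lower bound on this entropy translates directly into the desired upper bound on completeness. Exactly as in the proof of Theorem~\ref{theorem_lower_2}, I would first observe that $\rho_i(X)$ and $\rho_i(Y)$ agree, up to a permutation of coordinates, with the concatenations $(\rho_{i^1}(X),\ldots,\rho_{i^k}(X))$ and $(\rho_{i^1}(Y),\ldots,\rho_{i^k}(Y))$; since conditional entropy is a joint expectation over $\gth \times \gT$, this reordering leaves the quantity unchanged.

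The step that distinguishes this bound from the lower bound is that I extract a single index $s$ rather than summing over all of them. I would invoke Lemma~\ref{lemma_3} in its conditional form to get
\[
H_L\!\left((\rho_{i^1}(X),\ldots,\rho_{i^k}(X)) \mid \rho_i(Y)\right) \geq \max_{1 \leq s \leq k} H_L\!\left(\rho_{i^s}(X) \mid \rho_i(Y)\right).
\]
It then remains, for each fixed $s$, to relate the entropy of $\rho_{i^s}(X)$ conditioned on \emph{all} factor groups to the single-projection quantity $H_L(\rho_{i^s}(X) \mid \rho_{i^s}(Y))$. Writing $\rho_i(Y)$ (up to permutation) as the pair $\left(\rho_{i^s}(Y), \rho_{\bigcup_{t \neq s} i^t}(Y)\right)$, the conditional form of Lemma~\ref{lemma_2} yields precisely
\[
H_L\!\left(\rho_{i^s}(X) \mid \rho_{i^s}(Y)\right) - H_L\!\left(\rho_{i^s}(X) \mid \rho_i(Y)\right) = I_L\!\left(\rho_{i^s}(X); \rho_{\bigcup_{t \neq s} i^t}(Y) \mid \rho_{i^s}(Y)\right) = A_s,
\]
so that $H_L(\rho_{i^s}(X) \mid \rho_i(Y)) = H_L(\rho_{i^s}(X) \mid \rho_{i^s}(Y)) - A_s$.

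Finally I would convert the single-projection term to its native base via Lemma~\ref{lemma_0}, exactly as in Theorem~\ref{theorem_lower_2}, giving $H_L(\rho_{i^s}(X) \mid \rho_{i^s}(Y)) = H_{L^s}(\rho_{i^s}(X) \mid \rho_{i^s}(Y))/\log_{L^s}(L) = (1 - C(i^s))/\log_{L^s}(L)$. Chaining the inequalities then gives
\[
H_L(\rho_i(X) \mid \rho_i(Y)) \geq \max_{1 \leq s \leq k}\left( \frac{1 - C(i^s)}{\log_{L^s}(L)} - A_s \right),
\]
and subtracting from $1$ yields the claim. The disentanglement bound follows from the symmetric argument, swapping $X \leftrightarrow Y$, $L \leftrightarrow F$, and rows $\leftrightarrow$ columns.

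The main obstacle — and the reason the surrounding text flags the bound as ``weaker'' and ``not totally controllable'' — is the correction term $A_s$. Unlike the clean lower bound of Theorem~\ref{theorem_lower_2}, the conditional mutual information between $\rho_{i^s}(X)$ and the remaining factor groups given $\rho_{i^s}(Y)$ need not vanish and is not pinned down by the single-projection metrics alone. The step requiring the most care is verifying that conditioning on all of $\rho_i(Y)$, rather than only on $\rho_{i^s}(Y)$, is exactly what introduces $A_s$ through Lemma~\ref{lemma_2}, so that this term (and not any slippage in the base conversion or the max step) is the genuine source of the bound's looseness.
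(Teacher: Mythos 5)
Your proposal is correct and matches the paper's own proof essentially step for step: the same permutation-invariance observation reducing $\rho_i$ to the concatenation of the $\rho_{i^s}$, the conditional form of Lemma~\ref{lemma_3} to extract the maximum over a single index $s$, the conditional form of Lemma~\ref{lemma_2} to identify the correction term $A_s$ exactly as the conditional mutual information introduced by conditioning on all of $\rho_i(Y)$ rather than only $\rho_{i^s}(Y)$, and Lemma~\ref{lemma_0} for the base change to $(1-C(i^s))/\log_{L^s}(L)$, with the symmetric swap of $X$ and $Y$ for disentanglement. Your closing diagnosis that $A_s$ (and not the base conversion or the max step) is the sole source of the bound's weakness also agrees with the paper's own interpretation of the result.
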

\begin{proof}
We only prove the bound for the completeness metric, as the case of disentanglement is exactly symmetric. $C(i)$ is defined as follows
\begin{equation*}
C(i) =  1 - H_{L} \left( \rho_{i}(X ) | \rho_{i}(Y ) \right).
\end{equation*}
Similar to the previous proof, we observe that the joint projection $\rho_{i}$ and the concatenation of the different projections $ \prod_{s=1}^{k} \rho_{i^s}$ are similar up to the permutation of the dimensions that originates from sorting the merged index sequences. This permutation has no influence on the conditional entropy since it is defined as a joint expectation on $\gth \times \gT$. Therefore we have that 
\begin{equation}\label{eq_aux1}
C(i) =  1 - H_{L} \left( \prod_{s=1}^{k} \rho_{i^s}(X) | \prod_{t=1}^{k} \rho_{i^t}(Y) \right).
\end{equation}
According to Lemma~\ref{lemma_3}, we have the following inequality for the joint conditional entropy
\begin{align*}
\max_{s = 1}^{k} H_{L} \left(  \rho_{i^s}(X) | \prod_{t=1}^{k} \rho_{i^t}(Y) \right) 
\leq H_{L} \left( \prod_{s=1}^{k} \rho_{i^s}(X) | \prod_{t=1}^{k} \rho_{i^t}(Y) \right) 
\end{align*}
According to Lemma~\ref{lemma_2}, we also know that 
\begin{align*}
 H_{L} \left(  \rho_{i^s}(X) | \prod_{t=1}^{k} \rho_{i^t}(Y) \right) = H_{L} \left(  \rho_{i^s}(X) |  \rho_{i^s}(Y) \right) - A_s,
\end{align*}
where
\[
A_s = I_L \left( \rho_{i^s}(X);\rho_{\bigcup\limits_{t \neq s} \vi^t}(Y)  | \rho_{i^s}(Y) \right).
\]
Applying Lemma~\ref{lemma_0} to change base, we obtain
\begin{equation*}
 H_{L} \left(  \rho_{i^s}(X) | \prod_{t=1}^{k} \rho_{i^t}(Y) \right) = \frac{1 - C( i^s)}{\log_{L^s}{L}} - A_s.
\end{equation*}
Together with Equation~(\ref{eq_aux1}), this yields
\[
  C(i)  \leq  1 - \max_{1 \leq s \leq k} \left ( \frac{1 - C( i^s)}{\log_{L^s}{L}} - A_s \right).
\]
\end{proof}

\paragraph{Interpretation of the upper bound}

The goal of Theorem~\ref{theorem_upper} is to use the individual projections to obtain an upper bound for the joint disentanglement (resp. completeness). Intuitively, the meaning of the bound 
\[
  C(i)  \leq  1 - \max_{1 \leq s \leq k} \left ( \frac{1 - C( i^s)}{\log_{L^s}{L}} - A_s \right).
\]
is that the joint completeness can not be better than any of the individual completeness. However, this bound is weaker than Theorem~\ref{theorem_lower} because of two main restrictions. First, it is harder to get rid of the $\log_{L^s}(L)$ term as $L$ is greater than $L^s$. One possibility is to notice that $L \leq L^1 \cdot \ldots \cdot L^k$. If we make the additional assumption that $L^1 \sim \ldots \sim L^k$, then we obtain $\log_{L^s}(L) \leq k$, which gives us 
\[
  C(i)  \leq 1 - \frac{1}{k} + \min_{1 \leq s \leq k} \left( \frac{C( i^s)}{k} + A_s \right).
\]
This first assumption can be considered reasonable. But more importantly, we notice an additional interaction term $A_s$ in the minimum that is based on mutual information between the different projections. This term can not be removed as it is non-negative. The intuition for this conditional mutual information is that it measures the dependence of the different projections. It is $0$ exactly when $\rho_{i^s}(X)$ and $\rho_{\bigcup_{t \neq s} i^t}(Y)$ are independent conditioned on $ \rho_{i^s}(Y)$. Assuming that this is case, we obtain the simplest inequality
\[
  C(i)  \leq 1 - \frac{1}{k} + \min_{1 \leq s \leq k} \frac{C( i^s)}{k}.
\]
However, there is absolutely no guarantee that this assumption is valid. To conclude, the upper bound is weaker than Theorem~\ref{theorem_lower} because it depends on the additional terms $A_s$ and $B_s$ which account for the degree of dependence between projections and can not be easily controlled. This can be seen as a confirmation that our metrics convey strictly more information that the unstructured DCI framework.

In the following, we study the specific case where there are only two levels of hierarchy ($k = 2$). In this case, we can actually derive an exact formula rather two matching bounds.

\begin{theorem}[Case k = 2]\label{theorem_k_2}
Consider 2 disjunct projections $i^1$ and $i^2$, with respectively $L^1$ and $L^2$ groups of latents, and $F^1$, $F^2$ groups of factors. Assume that the joint projection $i = i^1 \cup i^2$ has $L$ groups of latents and $F$ groups of factors. For the completeness metric, the following identity holds
\begin{align*}
C(i)&= 1 - \frac{1 - C(i^1)}{\log_{L^1}(L)} - \frac{1 - C(i^2)}{\log_{L^2}(L)} \\
+ &I_L(\rho_{i^1}(X);\rho_{i^1}(Y)|\rho_{i^2}(Y))\\
 + &I_L(\rho_{i^2}(X);\rho_{i^2}(Y)|\rho_{i^1}(Y)) \\
 + &I_L(\rho_{i^1}(X);\rho_{i^2}(X)|\rho_{i^1}(Y),\rho_{i^2}(Y)).
\end{align*}
In a symmetric way, for the disentanglement metric, 
\begin{align*}
D(i)&= 1 - \frac{1 - D(i^1)}{\log_{F^1}(F)} - \frac{1 - D(i^2)}{\log_{F^2}(F)} \\
+ &I_F(\rho_{i^1}(Y);\rho_{i^1}(X)|\rho_{i^2}(X))\\
 + &I_F(\rho_{i^2}(Y);\rho_{i^2}(X)|\rho_{i^1}(X)) \\
 + &I_F(\rho_{i}(Y);\rho_{i^2}(Y)|\rho_{i^1}(X),\rho_{i^2}(X)).
\end{align*}
\end{theorem}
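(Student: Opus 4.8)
The plan is to run the computation of Theorem~\ref{theorem_upper} essentially verbatim, but to upgrade its single inequality into an exact identity. Starting from $C(i) = 1 - H_L(\rho_i(X) \mid \rho_i(Y))$, I would first reuse the opening move of both earlier proofs: the full projection $\rho_i(X)$ equals the concatenation $(\rho_{i^1}(X),\rho_{i^2}(X))$ up to a permutation of tuple coordinates, and since conditional entropy is a joint expectation over $\gth \times \gT$ it is invariant under this reordering. Hence $C(i) = 1 - H_L(\rho_{i^1}(X),\rho_{i^2}(X) \mid \rho_{i^1}(Y),\rho_{i^2}(Y))$, and the whole task is to evaluate this one joint conditional entropy exactly. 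The only place Theorem~\ref{theorem_upper} lost exactness was its use of Lemma~\ref{lemma_3} to bound the joint conditional entropy by the max of its marginals; for $k=2$ I would replace that step by the exact Lemma~\ref{lemma_4}.

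Concretely, writing $Z = (\rho_{i^1}(Y),\rho_{i^2}(Y))$, I would apply Lemma~\ref{lemma_4} to the pair $(\rho_{i^1}(X),\rho_{i^2}(X))$ conditioned on $Z$, splitting the joint conditional entropy as $H_L(\rho_{i^1}(X)\mid Z) + H_L(\rho_{i^2}(X)\mid Z) - I_L(\rho_{i^1}(X);\rho_{i^2}(X)\mid Z)$. The subtracted conditional mutual information is exactly the third correction term of the statement; it is the term that has no analogue in the lower/upper bounds precisely because Lemma~\ref{lemma_3} had discarded it. Then, exactly as in Theorem~\ref{theorem_upper}, I would use Lemma~\ref{lemma_2} to strip the extra factor coordinate from each of the remaining terms, writing $H_L(\rho_{i^1}(X)\mid Z) = H_L(\rho_{i^1}(X)\mid \rho_{i^1}(Y)) - I_L(\rho_{i^1}(X);\rho_{i^2}(Y)\mid\rho_{i^1}(Y))$ and symmetrically for the second projection. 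These are precisely the correction terms $A_1,A_2$ of Theorem~\ref{theorem_upper}.

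It then remains to convert each $H_L(\rho_{i^s}(X)\mid\rho_{i^s}(Y))$ to base $L^s$ via Lemma~\ref{lemma_0}, exactly as in Theorem~\ref{theorem_lower_2}, obtaining $(1-C(i^s))/\log_{L^s}(L)$, and to substitute everything back into $C(i) = 1 - H_L(\cdots)$. This assembles all five pieces into the claimed identity. The disentanglement case follows under the symmetric relabeling $X\leftrightarrow Y$, rows $\leftrightarrow$ columns, and $(L,L^s)\to(F,F^s)$.

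Since every step is an exact information-theoretic equality, there is no analytic obstacle; the real care lies in correctly placing the roles of the two factor coordinates in the cross terms. Stripping the extra coordinate via Lemma~\ref{lemma_2} must yield $I_L(\rho_{i^1}(X);\rho_{i^2}(Y)\mid\rho_{i^1}(Y))$ — the first projection's latent against the \emph{other} projection's factor, conditioned on its \emph{own} factor — and transposing the argument with the conditioning factor here produces a formula that fails a chain-rule consistency check. I would therefore finish by expanding all five right-hand terms back into joint entropies and checking that they telescope down to $H_L(\rho_{i^2}(X)\mid Z) + H_L(\rho_{i^1}(X)\mid\rho_{i^2}(X),Z) = H_L(\rho_{i^1}(X),\rho_{i^2}(X)\mid Z)$, which is the single consistency condition and pins down the exact form of the cross terms.
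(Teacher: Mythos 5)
Your proof takes exactly the paper's route: rewrite $\rho_i(X)$ as the concatenation $(\rho_{i^1}(X),\rho_{i^2}(X))$ up to a coordinate permutation that leaves conditional entropy unchanged, split the joint conditional entropy with Lemma~\ref{lemma_4}, strip the extra factor coordinate with Lemma~\ref{lemma_2}, and change base with Lemma~\ref{lemma_0}, with disentanglement by symmetry. The one point of divergence is the one your consistency check flags, and there you are right while the printed statement is not: Lemma~\ref{lemma_2} forces
\begin{equation*}
H_L\left(\rho_{i^1}(X)\mid\rho_{i^1}(Y),\rho_{i^2}(Y)\right)=H_L\left(\rho_{i^1}(X)\mid\rho_{i^1}(Y)\right)-I_L\left(\rho_{i^1}(X);\rho_{i^2}(Y)\mid\rho_{i^1}(Y)\right),
\end{equation*}
i.e.\ each projection's latent paired with the \emph{other} projection's factor, conditioned on its \emph{own} factor; this form is forced because only $H_L(\rho_{i^s}(X)\mid\rho_{i^s}(Y))$ converts into $(1-C(i^s))/\log_{L^s}(L)$ under Lemma~\ref{lemma_0}. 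The theorem as stated --- and the paper's own proof, whose corresponding step writes the garbled $I_L(\rho_{i^1}(X);\rho_{i^1}(Y)\mid\rho_{i^2}(Y),\rho_{i^2}(Y))$ --- transposes the second argument with the conditioning variable, giving $I_L(\rho_{i^1}(X);\rho_{i^1}(Y)\mid\rho_{i^2}(Y))$; by the chain rule this differs from the correct cross term by $I_L(\rho_{i^1}(X);\rho_{i^1}(Y))-I_L(\rho_{i^1}(X);\rho_{i^2}(Y))$, which is nonzero in general, so the identity as printed fails precisely the telescoping test you propose (the disentanglement statement carries the same transposition, plus a stray $\rho_{i}(Y)$ in its last term where $\rho_{i^1}(Y)$ is meant). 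In short: same approach as the paper, correctly executed, and your version of the mutual-information correction terms is the one that should appear in the theorem.
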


\begin{proof}
Again, we only prove the result for the completeness metric as disentanglement is exactly symmetric. In the case where $k = 2$, the joint completeness is defined as follows
\begin{align*}
C(i) =  1 - H_{L} \left( \rho_{i}(X ) | \rho_{i}(Y ) \right) 
= 1 - H_{L} \left( \rho_{i^1}(X ),\rho_{i^2}(X ) | \rho_{i^1}(Y ),\rho_{i^2}(Y ) \right).
\end{align*}
With Lemma~\ref{lemma_4}, we can rewrite this last expression as
\begin{align*}
 C(i) &= 1 - H_{L} \left( \rho_{i^1}(X ) | \rho_{i^1}(Y ),\rho_{i^2}(Y ) \right) 
 - H_{L} \left( \rho_{i^2}(X ) | \rho_{i^1}(Y ),\rho_{i^2}(Y ) \right) + \\
 &I_{L} \left(\rho_{i^1}(X ); \rho_{i^2}(X ) | \rho_{i^1}(Y ),\rho_{i^2}(Y ) \right).
\end{align*}
Applying Lemma~\ref{lemma_2} twice, we obtain
\begin{align*}
 H_{L} \left( \rho_{i^1}(X ) | \rho_{i^1}(Y ),\rho_{i^2}(Y ) \right) 
 = H_{L} \left( \rho_{i^1}(X ) | \rho_{i^1}(Y ) \right) 
 - I_{L} \left(\rho_{i^1}(X ); \rho_{i^1}(Y) | \rho_{i^2}(Y ),\rho_{i^2}(Y ) \right),
\end{align*}
and
\begin{align*}
 H_{L} \left( \rho_{i^2}(X ) | \rho_{i^1}(Y ),\rho_{i^2}(Y ) \right) 
 = H_{L} \left( \rho_{i^2}(X ) | \rho_{i^2}(Y ) \right) 
 - I_{L} \left(\rho_{i^2}(X ); \rho_{i^2}(Y) | \rho_{i^1}(Y ).\rho_{i^2}(Y ) \right),
\end{align*}
Using Lemma~\ref{lemma_0} to change base for the two conditional entropies gives us the final equation
\begin{align*}
C(i)&= 1 - \frac{1 - C(i^1)}{\log_{L^1}(L)} - \frac{1 - C(i^2)}{\log_{L^2}(L)} \\
&+ I_L(\rho_{i^1}(X);\rho_{i^1}(Y)|\rho_{i^2}(Y))\\
& + I_L(\rho_{i^2}(X);\rho_{i^2}(Y)|\rho_{i^1}(Y)) +\\
& + I_L(\rho_{i^1}(X);\rho_{i^2}(X)|\rho_{i^1}(Y),\rho_{i^2}(Y)),
\end{align*}
which concludes the proof.
\end{proof}

\section{Reproducibility}

\subsection{Permutation invariant feature probing}\label{sec_local}

Complementary to our permutation invariant probing algorithm, we use a specific algorithm for generation of the evaluation dataset. It consists in dividing the evaluation dataset into groups of inputs with similar factor values. First, this helps for the initialization of our EM-like approach, as the object-to-slot assignment exhibits less variation for similar images. Second, this allows to visually verify that the learned permutation is consistent across inputs (Figures~\ref{fig_perm_1},~\ref{fig_perm_2} and~\ref{fig_perm_3}). Finally, it breaks the symmetry between objects, allowing for more meaningful visualizations of the projected latent space. To obtain a global metric, we average the values across the different groups.  The method for generating the evaluation dataset is summarized in  Algorithm~\ref{alg:local}. Exact hyperparameters are given below, and an ablation study can be found in Section~\ref{sec_add_exp}.

\setcounter{algorithm}{1}
\renewcommand{\algorithmiccomment}[1]{\emph{\# #1}}
\begin{algorithm}
   \caption{Generation of the evaluation dataset}
   \label{alg:local}
\begin{algorithmic}
   \FOR{$g=1$ {\bfseries to} $\mathrm{n\_groups}$}

   \STATE \COMMENT{Generate initial factor values for the group}
   \FOR{$f$ {\bfseries in} $\mathrm{factors}$}
   \STATE Sample initial value $\mathrm{initial}[f]$ for factor $f$
   
   \ENDFOR
   
   \FOR{$i=1$ {\bfseries to} $\mathrm{n\_samples}$}

   \STATE \COMMENT{Sample factor values locally} 
   \FOR{$f$ {\bfseries in} $\mathrm{factors}$}
   \STATE Sample value $\mathrm{image\_i}[f]$ near $\mathrm{initial}[f]$ 
   \STATE Generate image $i$ from factor values $\mathrm{image\_i}$
   
   \ENDFOR
   
   \ENDFOR
   
   \ENDFOR
   
\end{algorithmic}
\end{algorithm}

\subsection{Models}

\paragraph{MONet} We train MONet exactly as in~\cite{burgess2019monet}, except that we set $\sigma_{fg} = 0.1$ and $\sigma_{bg} = 0.06$ which was shown to yield better results in~\citep{greff2019multi}. Besides, we use GECO~\citep{rezende2018generalized} to automatically tune the $\beta$ parameter during training. The reconstruction constraint was manually set to ensure satisfying visual results (-1.78 for Multi-dSprites, -1.75 for CLEVR6). We found it especially important for the beginning of the training to set a minimum value of 1 for $\beta$. Failure to do so resulted in frequent local optima for the representation in which the network was stuck. We used the implementation provided by the authors of~\cite{engelcke2019genesis}. Note that contrary to the latter, we do not include $\gamma$ in the GECO framework. We use the value of $0.5$ given the authors of MONet. The $\gamma$ parameter controls a mask reconstruction loss, therefore we did not set it to $0$ in MONet (R). 

\paragraph{GENESIS} We follow the training process described in~\citep{engelcke2019genesis} and use the implementation provided by the authors, under GNU General Public License. CLEVR6 is not considered in this paper: we used the same parameters as for Multi-dSprites and changed the reconstruction objective to $-1.428$. 

\paragraph{IODINE}
We use the parameters described in~\citep{greff2019multi}, and a third-party implementation.\footnote{\href{https://github.com/zhixuan-lin/IODINE}{github.com/zhixuan-lin/IODINE}, no provided license.} For Multi-dSprites, we had to increase the $\sigma$ parameter from $0.1$ to $0.14$ in order to obtain satisfying results with the slight variations in our dataset.

We train all models for 200 epochs. This corresponds to approximately 300 000 updates on Multi-dSprites and 450 000 updates on CLEVR6. Models were trained with one to four V100 GPUs. The training times ranges from a few hours to 1.5 days. 

\subsection{Training datasets}

\paragraph{Multi-dSprites} For training, we use the dataset and preprocessing code from~\cite{engelcke2019genesis}. Note that the datasets contains images composed of 1 to 4 objects, contrary to~\citep{greff2019multi} that goes up to 5 objects. This explains the better segmentation values in our setting compared to the latter. 

\paragraph{CLEVR6} We generate the training dataset similarly as in~\cite{greff2019multi}, with the modification that we add an additional constraint that all objects have to be visible inside the cropped 192x192 image. This modification is important for our metric. It tends to generate slightly denser scenes than in previous work. This might partly explain why we get slightly worse ARI for MONet compared to previous work (0.94 against 0.96). Scenes have at most 6 objects.

\subsection{Evaluation}

\paragraph{Evaluation datasets} The evaluation datasets are sampled in the same way, except that we restrict to images with 4 objects for Multi-dSprites and images with exactly 6 objects for CLEVR6. We sample 10 local groups (see Algorithm~2) for Multi-dSprites and 5 for CLEVR6. Each group has 5000 samples, with a 4000/500/500 split for fitting, validation and evaluation of the factor predictor. In Table~\ref{tab_factor}, we give the list of factors for each individual object in both datasets, with the possible values for each factor.

\paragraph{Factor prediction} For the temporary predictors in Algorithm~1 (inside the loop), we use a linear model with Ridge regularization. For the final predictor, we use a random forest with 10 trees, and a maximum depth of 15.  This because the random forest obtains better predictions, while the linear model permits faster iterations.  The number of iterations of the loop  is set to 100 for Multi-dSprites.  For computational reasons, we reduced this number to 20 for CLEVR6. This reduction does not harm performance as the vast majority of permutations happen in the first iterations. Most factors are continuous or ordinal: for these, we encode factor prediction as a regression task. The material factor in CLEVR6 has only two classes and we follow the encoding used in sklearn Ridge Classifier. On both datasets, the shape factor has three classes: we generalize the Ridge Classifier encoding and encode the classes as -1, 0 and -1. We match the classes to these values in order to minimize prediction error.  

\paragraph{Max tree depth} We tried different max tree depths (5, 10, 15, 20, 25, 30) on a validation set and observed that (i) a value of 15 generally gives better validation performance than 5 and 10 (ii) values > 15 do not significantly improve validation performance and get very slow to fit.  This is why we chose to always go with 15. Note that this optimal value might be related to the number of samples used to fit the predictor (currently 4000). Note that the validation was done globally, and not per factor.

\paragraph{Metric} In order to filter out noise in the feature importances, we set to 0 to all the relative importance coefficients that are less than $3\%$ of the column maximum in absolute value. In our evaluation of object-level disentanglement, we remove the factors and latent dimensions that are background related. For Multi-dSprites, the background slot is identified as the one with higher importance in predicting background color. For CLEVR6, there is no background color, but the background is always reconstructed by the first slot in the benchmarked models. The global informativeness metric is an unweighted average over all factors.

\begin{table*}
\caption{Ablation study of Algorithms 1  and 2 for Multi-dSprites. We report mean and std error over three random seeds. All values are in $\%$. $100$ is the best possible score and $0$ the worst, except for the informativeness metric where $0$ is best. All models are trained with full loss.}\label{tab_abl}
\begin{center}
\begin{scriptsize}
\begin{sc}
\tra{1.3}
\center
\addtolength{\tabcolsep}{-1pt}
\begin{tabular}{l l  c c  }
\toprule
  &  \multicolumn{2}{c}{Object-level} \\ 
 \cline{2-3}
 Model  &   Dis. $(\uparrow)$& Comp. $(\uparrow)$ & \\
\midrule
Monet (Att.)&     $52\pm4$ & $52\pm4$  \\
Monet (Att.) (without Alg 1) &  $30\pm2$ & $30\pm2$  \\
Monet (Att.) (without Alg 2) &  $57\pm1$ & $57\pm1$   \\
 Monet (Att.) (without Alg 1 and 2) &  $22\pm0.5$ & $22\pm0.5$ \\
\midrule
Monet (Rec.) &     $75\pm7$ & $75\pm7$ \\
Monet (Rec.) (without Alg 1) &     $40\pm3$ & $40\pm4$\\
Monet (Rec.) (without Alg 2) &     $64\pm4$ & $64\pm4$\\
Monet (Rec.) (without Alg 1 and 2) &     $24\pm1$ & $24\pm1$\\
\midrule
Genesis	& $90\pm0.8$    & $91\pm0.6$\\
Genesis (without Alg 1)& $89\pm0.1$         & $90\pm0.1$ \\
Genesis (without Alg 2)& $61\pm0.4$         & $61\pm0.4$ \\
Genesis (without Alg 1 and 2)& $33\pm0.3$         & $33\pm0.3$ \\
\midrule
 Iodine    &  $70\pm9$  & $72\pm10$   \\
 Iodine (without Alg 1)   & $32\pm2$         & $33\pm2$\\
 Iodine (without Alg 2)   & $66\pm1$         & $66\pm1$ \\
 Iodine (without Alg 1 and 2)   & $26\pm0.9$         & $26\pm0.9$\\
\bottomrule
\end{tabular}
\end{sc}
\end{scriptsize}
\end{center}
\vskip -0.1in
\end{table*}

\begin{figure*}
 \center
\includegraphics[width=\linewidth]{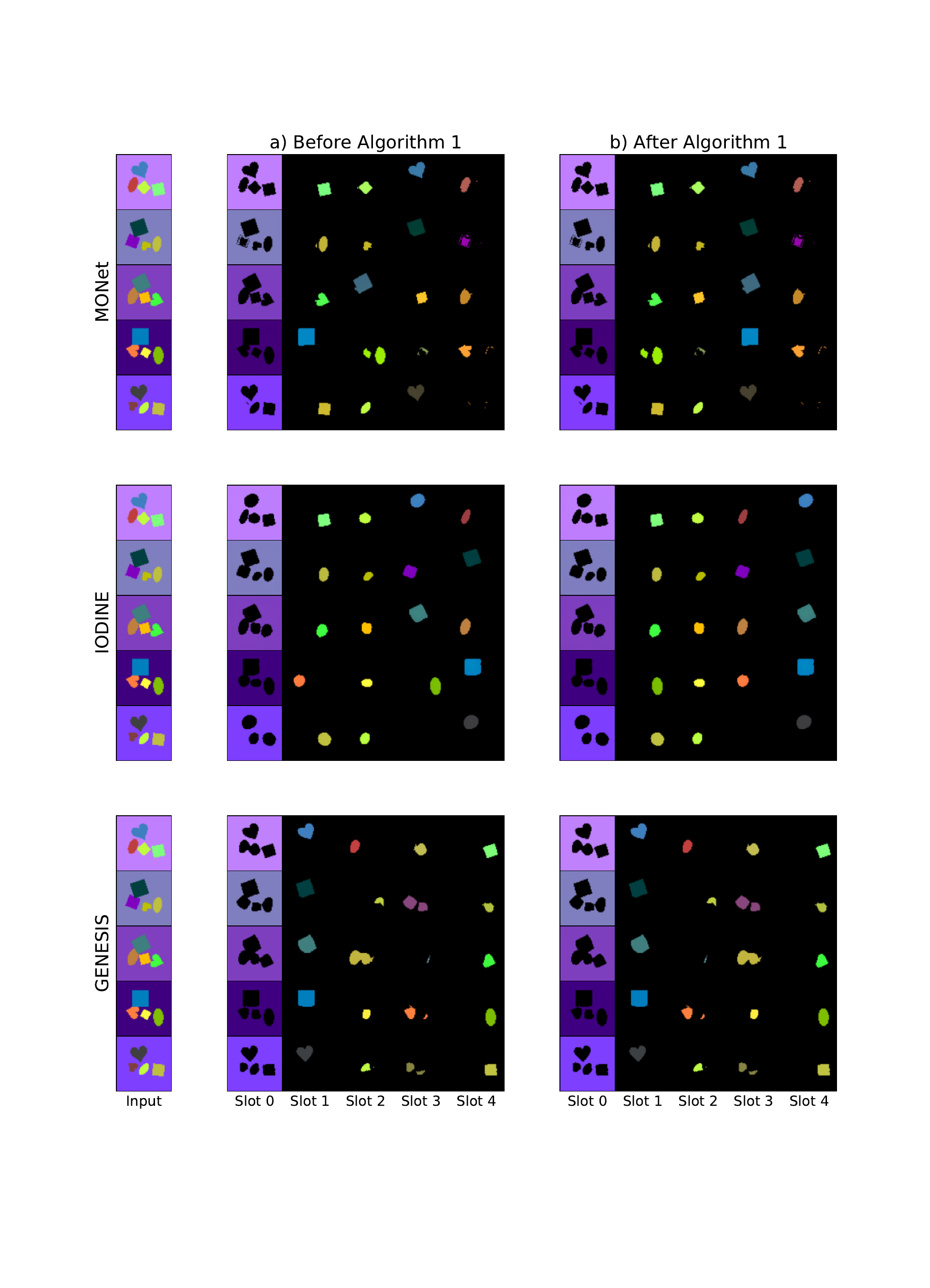}
\vspace{-2.5cm}
\caption{Visualization of the effect of Algorithm 1 on slot ordering for a group of similar inputs.}\label{fig_perm_1}
\end{figure*}

\begin{figure*}
 \center
\includegraphics[width=\linewidth]{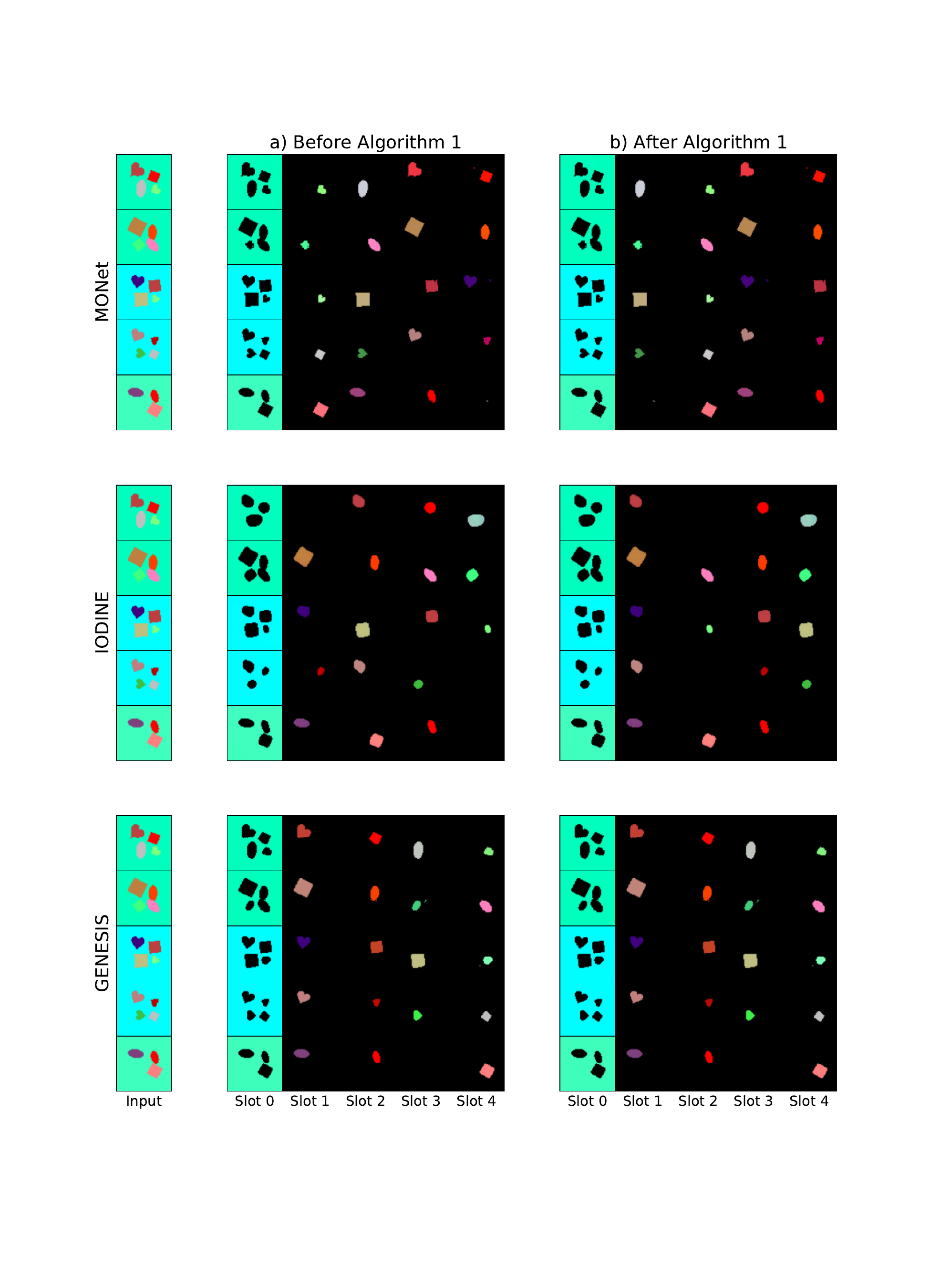}
\vspace{-2.5cm}
\caption{Visualization of the effect of Algorithm 1 on slot ordering for a group of similar inputs.}\label{fig_perm_2}
\end{figure*}

\begin{figure*}
 \center
\includegraphics[width=\linewidth]{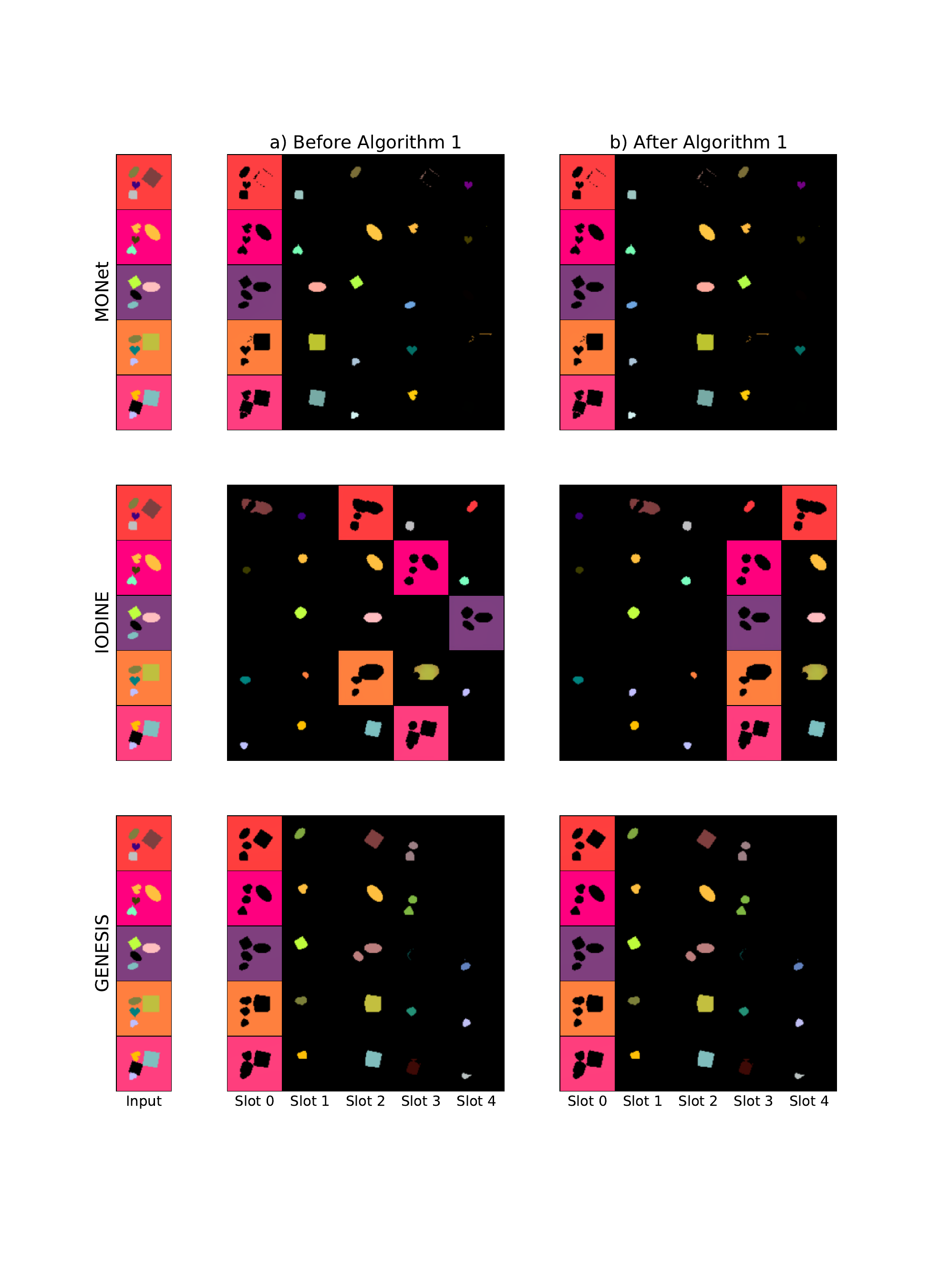}
\vspace{-2.5cm}
\caption{Visualization of the effect of Algorithm 1 on slot ordering for a group of similar inputs.}\label{fig_perm_3}
\end{figure*}

\begin{figure*}
 \center
\includegraphics[width=\linewidth]{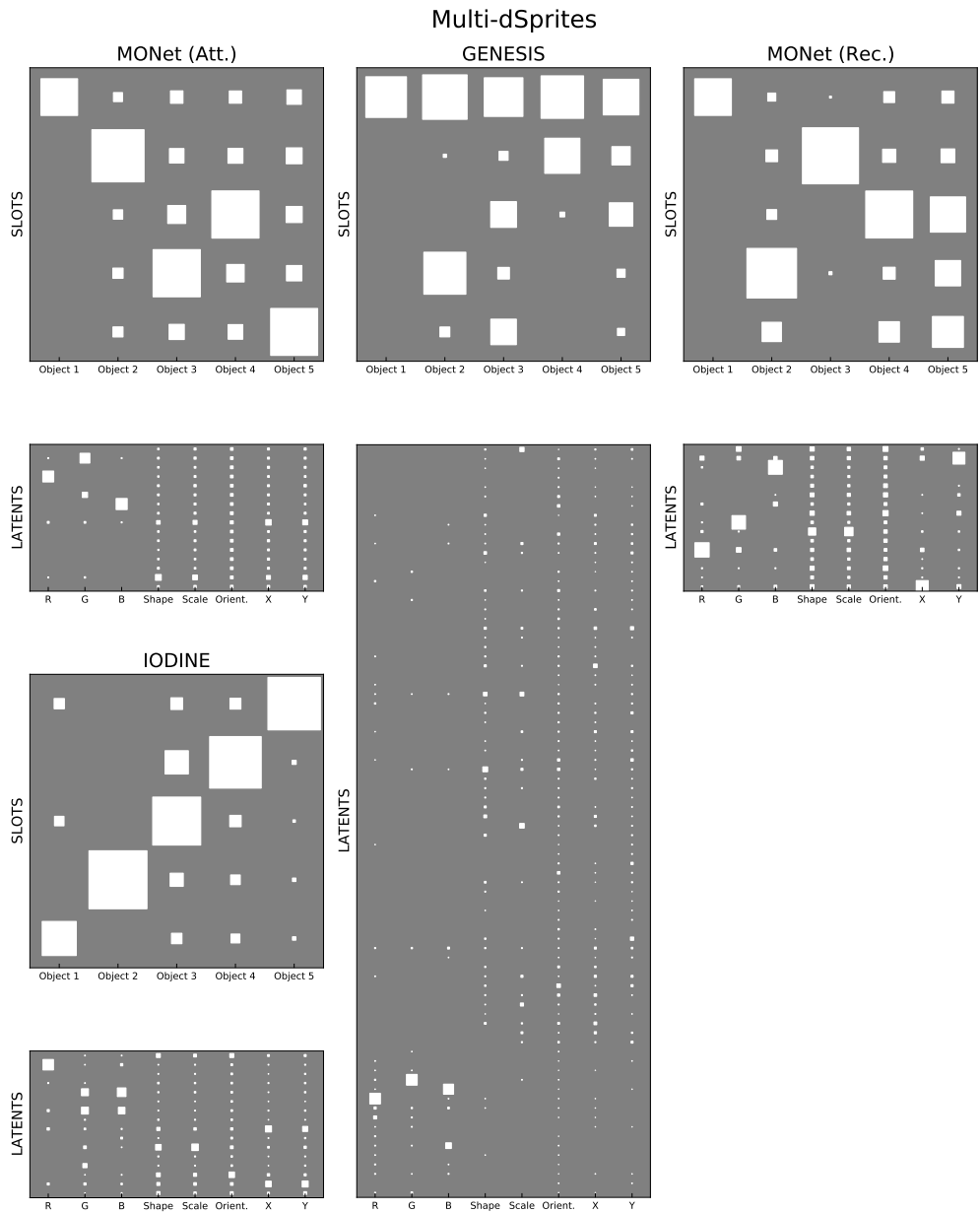}
\caption{Projections of the affinity matrix at object-level and property-level for all four models trained with full loss on Multi-dSprites. (Hinton diagram)}\label{fig_mds_coeffs}
\end{figure*}

\begin{figure*}
 \center
\includegraphics[width=0.6\linewidth]{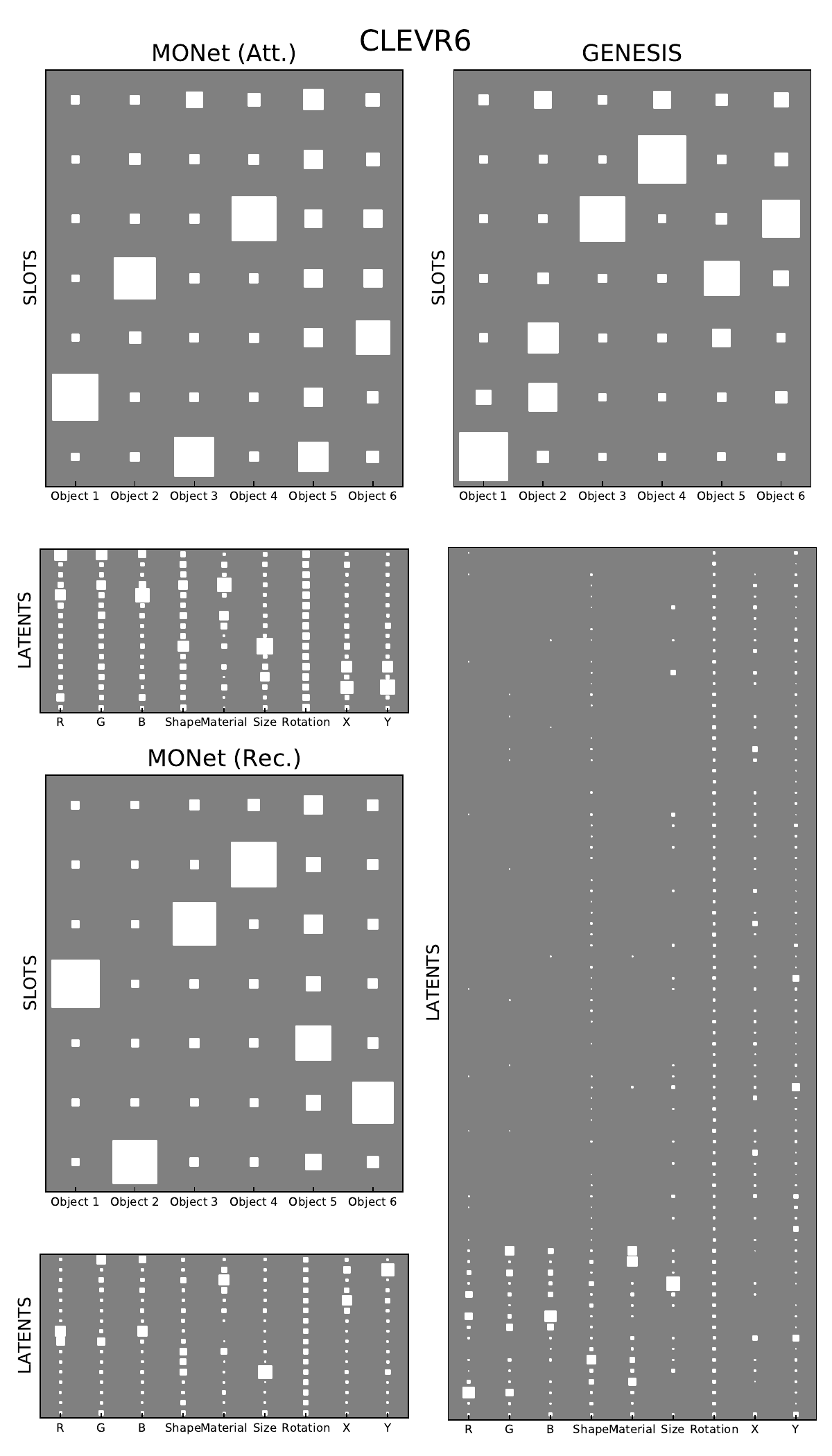}
\caption{Projections of the affinity matrix at object-level and property-level for all three models trained with fullloss on CLEVR6 (Hinton diagram).}\label{fig_clevr_coeffs}
\end{figure*}

\newcolumntype{C}{>{$}c<{$}} 

\begin{table*}
\caption{Informativeness and Completeness per object property for all models trained with full loss on Multi-dSprites.}\label{tab_factor_inf}
\begin{center}
\begin{scriptsize}
\begin{sc}
\tra{1.3}
\center
\addtolength{\tabcolsep}{-1pt}
\begin{tabular}{lccccccccc}
\toprule
{} & \multicolumn{4}{l}{Completeness $(\uparrow)$} && \multicolumn{4}{l}{Informativeness $(\downarrow)$} \\

 \cline{2-5}\cline{7-10}  
{} &    Genesis  & Iodine & Monet (A) & Monet (Dec) && Genesis &  Iodine & Monet (A) & Monet (Dec)  \\
\midrule
R Channel   &           91 &     71 &    88 &        94 &&       9 &     16 &     5 &         5 \\
G Channel   &           90 &     68 &    81 &        89 &&      10 &     17 &     6 &         7 \\
B Channel   &           90 &     77 &    86 &        96 &&       9 &     16 &     4 &         5 \\
Shape       &           30 &     49 &    42 &        50 &&      26 &     45 &    59 &        34 \\
Scale       &           32 &     49 &    42 &        55 &&      38 &     53 &    66 &        41 \\
Orientation &           24 &     48 &    36 &        45 &&      31 &     65 &    97 &        62 \\
X           &           34 &     59 &    39 &        69 &&      36 &     50 &    75 &        36 \\
Y           &           33 &     63 &    39 &        67 &&      35 &     48 &    74 &        34 \\
\bottomrule
\end{tabular}

\end{sc}
\end{scriptsize}
\end{center}
\vskip -0.1in
\end{table*}

\begin{table*}
\caption{Informativeness and Completeness per object property for all models trained with full loss on CLEVR6.}\label{tab_factor_inf_2}
\begin{center}
\begin{scriptsize}
\begin{sc}
\tra{1.3}
\center
\addtolength{\tabcolsep}{-1pt}
\begin{tabular}{lccccccc}
\toprule
{} & \multicolumn{3}{l}{Completeness $(\uparrow)$} && \multicolumn{3}{l}{Informativeness $(\downarrow)$} \\
 \cline{2-4}\cline{6-8} 
{} &     Genesis  & Monet (Att.) & Monet (Rec.) && Genesis &  Monet (Att.) & Monet (Rec.)  \\
\midrule
R Channel &           54 &    47 &        47 &&      47 &    57 &        61 \\
G Channel &           59 &    50 &        46 &&      41 &    58 &        62 \\
B Channel &           61 &    54 &        46 &&      38 &    47 &        58 \\
Shape     &           27 &    40 &        39 &&      64 &    66 &        66 \\
Material  &           56 &    56 &        53 &&      28 &    39 &        41 \\
Size      &           49 &    64 &        63 &&      24 &    28 &        22 \\
Rotation  &            NA &    36 &        36 &&     106 &   107 &       107 \\
X         &           27 &    52 &        54 &&      38 &    47 &        41 \\
Y         &           31 &    61 &        65 &&      30 &    39 &        32 \\
\bottomrule
\end{tabular}

\end{sc}
\end{scriptsize}
\end{center}
\vskip -0.1in
\end{table*}

\begin{table*}[t]
\caption{Description of the factors of variation for each individual object in both datasets. For each factor, we give the set of possible values, as well as the exact locality constraints used in Algorithm~2. $x$ denotes the initial value sampled in Algorithm~2. When $x$ has values in an array, we denote $i$ the index such that $x = a[i]$. The Table indicates whether each of the considered factors is intrinsic or extrinsic. The question of whether to consider object size as intrinsic can be debated, however this does not change the results of our experiments.}\label{tab_factor}
\begin{center}
\begin{scriptsize}
\begin{sc}
\tra{1.3}
\center
\addtolength{\tabcolsep}{-1pt}
\begin{tabular}{l l  c c c c c c c c c c c }
\toprule
Dataset & Factors of variation & Type & Possible values & Local constraint (Alg.~2) \\
\midrule
\multirow{8}{*}{MdSpr.} & Color (R channel) & Intrinsic &  $[0,63,127,191,255]$ & $[a[i-1],a[i],a[i+1]]$ \\               
						& Color (G channel) & Intrinsic &  $[0,63,127,191,255]$ & $[a[i-1],a[i],a[i+1]]$ \\
						& Color (B channel) & Intrinsic &  $[0,63,127,191,255]$ & $[a[i-1],a[i],a[i+1]]$ \\
						& Shape             & Intrinsic & $\{ \mathrm{circle} ,\mathrm{square}, \mathrm{heart} \}$  & $\{ \mathrm{circle} ,\mathrm{square}, \mathrm{heart} \}$\\
						& Scale             & Intrinsic & $[0,1,2,3,4,5]$ & $[a[i-1],a[i],a[i+1]]$ \\
						& Orientation       & Extrinsic & $[0,1,\ldots,30,39]$ & $[a[i-3],a[i-2],\ldots,a[i+2],a[i+3]]$ \\
						& X coord.          & Extrinsic & $[0,1,\ldots,30,31]$ & $[a[i-2],\ldots,a[i+2]]$ \\
						& Y coord.          & Extrinsic & $[0,1,\ldots,30,31]$ & $[a[i-2],\ldots,a[i+2]]$ \\
\midrule
\multirow{9}{*}{Clevr6} & Color (R channel) & Intrinsic & $[0,0.2,0,4,0.6,0.8,1]$ & $[a[i-1],a[i],a[i+1]]$ \\                
						& Color (G channel) & Intrinsic & $[0,0.2,0,4,0.6,0.8,1]$ & $[a[i-1],a[i],a[i+1]]$ \\
						& Color (B channel) & Intrinsic & $[0,0.2,0,4,0.6,0.8,1]$ & $[a[i-1],a[i],a[i+1]]$ \\
						& Shape             & Intrinsic & $\{ \mathrm{cube} ,\mathrm{sphere}, \mathrm{cylinder} \}$  & $\{ \mathrm{cube} ,\mathrm{sphere}, \mathrm{cylinder} \}$\\
						& Material          & Intrinsic & $\{ \mathrm{rubber} ,\mathrm{metal} \}$  & $\{ \mathrm{rubber},\mathrm{metal} \}$\\
						& Size              & Intrinsic & $[0.35,0.7]$            & $[0.35,0.7]$\\
						& Rotation          & Extrinsic & $\llbracket 0,360\rrbracket$ & $\llbracket 0,360\rrbracket$ \\
				& X coord.          & Extrinsic & $\llbracket -3,3\rrbracket$&$ \llbracket x-0.7,x+0.7\rrbracket\cap \llbracket-3,3\rrbracket$\\
				& Y coord.          & Extrinsic & $\llbracket -3,3\rrbracket$&$ \llbracket x-0.7,x+0.7\rrbracket\cap \llbracket-3,3\rrbracket$\\
						
\bottomrule
\end{tabular}
\end{sc}
\end{scriptsize}
\end{center}
\vskip -0.1in
\end{table*}

\section{Additional Experimental Results}\label{sec_add_exp}

\subsection{Visualization of the permutation inferred by our feature importance algorithm}

In this Section, we inspect the slot ordering inferred by Algorithm~1, by visualizing the masked reconstruction for each slot. Note that Algorithm 1 does not make use of these reconstructions. In Figures~\ref{fig_perm_1},\ref{fig_perm_2} and~\ref{fig_perm_3} we compare slot ordering before (on the left) and after (on the right) applying the algorithm, for two groups of similar images (generated following Algorithm~2). We include all the considered architectures, trained on Multi-dSprites with variational loss.

On the left, we observe that the slot ordering inferred by the models is not consistent across similar images. This effect is particularly visible for IODINE, despite the fact that we set the model internal noise to a deterministic value (following observations in~\cite{greff2019multi}). The background slot is not even deterministic. This unpredictable ordering is extremely detrimental to the accuracy of our metric. Applying our metric to IODINE with this initial ordering yielded extremely poor values which do not do justice to the good representation learned by this model.

On the right, we see that Algorithm~1 successfully learns a consistent slot ordering, and puts matching objects at the same position. We notice almost perfect alignment, except for some failure cases with IODINE, where the background slot is switched with a missing object. We are uncertain as to the reasons for this failure, but it has limited impact on final performance as we remove the background factors in the object-level projection of the affinity matrix.

\subsection{Visualizations of the latent space}

In Figures~\ref{fig_mds_coeffs} and~\ref{fig_clevr_coeffs}, we show visualizations of the projected latent space for the different architectures (trained with variational loss) for a group of inputs in both evaluation datasets. The qualitative inspection of the object-level projection is consistent with the numerical comparison: GENESIS is visually more disentangled, while MONet and IODINE achieve inferior but still satisfying disentanglement. On Multi-dSprites, the under-performance of the MONet (Att.) variant is also visible. On Multi-dSprites, we observe that the GENESIS background slot also contains information about all objects. This is not surprising as the background mask is in some sense a negative of the different object masks. However, this phenomenon is less marked with other architectures. We think that the specific mask encoding of GENESIS somehow strengthens this duplication of information between background and object slot. Note that this does not harm performance as we removed the background slot in the final metric. We believe that this post-processing is fair because of the expected duplication effect described above.

Qualitative comparison of the property-level projections is harder due to the higher number of dimensions. Still, some observations can be made. Among object properties, color consistently obtain the best separation in the representation. On the GENESIS model, we also notice a clear separation between the last 16 dimensions that correspond to the component latent and the rest of the slot. Color is almost exclusively encoded by the component latent.

\subsection{Decomposition of the informativeness and completeness per factor}

In Table~\ref{tab_factor_inf}, we decompose informativeness of the different models per property. Note that the completeness results do not necessarily  average to the global object-level completeness as our metric involves a weighted average. On Multi-dSprites, we observe that the superior results of GENESIS mostly comes from the group of extrinsic factors that are related to the segmentation mask. This would support the hypothesis that the innovative mask encoding of GENESIS is at least partly responsible for the performance increase. However, this effect is less clear on CLEVR6. 

On CLEVR6, we notice particularly bad metrics for the rotation property. This is not surprising as the rotation parameter is completely useless for two of the three shapes (sphere and cylinder) and redundant for the last one (cube). Because of this bad identifiability and affinity scores thresholding, we can not compute rotation completeness for GENESIS. This does not impact the final metric due to the weighting scheme of our metric. Concerning color channels, we note very different behavior between Multi-dSprites and CLEVR6. We believe that this is due to a particularity of CLEVR6 which is that the set of color is restricted in the training dataset.

\subsection{Ablation of Algorithms 1 and 2}

In Table~\ref{tab_abl}, we compare the object-level values given by our metric with and without Algorithms 1 and 2. We observe a significant performance drop when abalating these algorithms. This is consistent with visual inspection of the slot ordering.

\end{document}